\DeclareMathOperator*{\argmax}{argmax}
\DeclareMathOperator*{\argmin}{argmin}
\newcommand\figcaption{\def\@captype{figure}\caption}
\newcommand\tabcaption{\def\@captype{table}\caption}
\title{CSOT: Curriculum and Structure-Aware Optimal Transport for Learning with Noisy Labels
}
\author{%
  Wanxing Chang$^1$\hspace{1em}
  Ye Shi$^{1,2}$
  \hspace{1em}
  Jingya Wang$^{1,2}$\thanks{Corresponding author.}\\\vspace{1pt}\\
  $^1$ShanghaiTech University
  \hspace{1em}\\
  $^2$Shanghai Engineering Research Center of Intelligent Vision and Imaging \\
  \\ \texttt{\{changwx,shiye,wangjingya\}@shanghaitech.edu.cn} \\
  % \texttt{tuan.hoang@uts.edu.au} \\
}
\begin{document}

\maketitle

\begin{abstract}
Learning with noisy labels (LNL) poses a significant challenge in training a well-generalized model while avoiding overfitting to corrupted labels. Recent advances have achieved impressive performance by identifying clean labels and correcting corrupted labels for training. However, the current approaches rely heavily on the model’s predictions and evaluate each sample independently without considering either the global or local structure of the sample distribution.
These limitations typically result in a suboptimal solution for the identification and correction processes, which eventually leads to models overfitting to incorrect labels. In this paper, we propose a novel optimal transport (OT) formulation, called Curriculum and Structure-aware Optimal Transport (CSOT). CSOT concurrently considers the inter- and intra-distribution structure of the samples to construct a robust denoising and relabeling allocator.
During the training process, the allocator incrementally assigns reliable labels to a fraction of the samples with the highest confidence. These labels have both global discriminability and local coherence. Notably, CSOT is a new OT formulation with a nonconvex objective function and curriculum constraints, so it is not directly compatible with classical OT solvers. Here, we develop a lightspeed computational method that involves a scaling iteration within a generalized conditional gradient framework to solve CSOT efficiently.
Extensive experiments demonstrate the superiority of our method over the current state-of-the-arts in LNL.
Code is available at \url{https://github.com/changwxx/CSOT-for-LNL}.

\end{abstract}

\section{Introduction}

Deep neural networks (DNNs) have significantly boosted performance in various computer vision tasks, including image classification \cite{he2016deep}, object detection \cite{ren2015faster}, and semantic segmentation \cite{he2017mask}. 
However, the remarkable performance of deep learning algorithms heavily relies on large-scale high-quality human annotations, which are extremely expensive and time-consuming to obtain. 
Alternatively, mining large-scale labeled data based on a web search and user tags \cite{webvision, jiang2020beyond} can provide a cost-effective way to collect labels, but this approach inevitably introduces noisy labels.
Since DNNs can so easily overfit to noisy labels \cite{arpit2017closer, zhang2021understanding}, such label noise can significantly degrade performance, giving rise to a challenging task: learning with noisy labels (LNL) \cite{ELR, Decoupling, DivideMix}.

Numerous strategies have been proposed to mitigate the negative impact of noisy labels, including loss correction based on transition matrix estimation \cite{hendrycks2018using}, re-weighting \cite{ren2018learning}, label correction \cite{PENCIL} and sample selection \cite{Decoupling}.
Recent advances have achieved impressive performance by identifying clean labels and correcting corrupted labels for training.
However, current approaches rely heavily on the model’s predictions to identify or correct labels even if the model is not yet sufficiently trained.
Moreover, these approaches often evaluate each sample independently, disregarding the global or local structure of the sample distribution.
Hence, the identification and correction process results in a suboptimal solution which eventually leads to a model overfitting to incorrect labels.

In light of the limitations of distribution modeling, optimal transport (OT) offers a promising solution by optimizing the global distribution matching problem that searches for an efficient transport plan from one distribution to another.
To date, OT has been applied in various machine learning tasks \cite{SwAV,zheng2021group,guo2022learning}.
In particular, OT-based pseudo-labeling \cite{SwAV,OTCleaner} attempts to map samples to class centroids, while considering the \textit{inter-distribution} matching of samples and classes. 
However, such an approach could also produce assignments that overlook the inherent coherence structure of the sample distribution, \ie \textit{intra-distribution} coherence.
More specifically, the cost matrix in OT relies on pairwise metrics, so two nearby samples could be mapped to two far-away class centroids (Fig. \ref{fig:overview}).

In this paper, to enhance intra-distribution coherence, we propose a new OT formulation for denoising and relabeling, called Structure-aware Optimal Transport (SOT). 
This formulation fully considers the intra-distribution structure of the samples and produces robust assignments with both \textit{global discriminability} and \textit{local coherence}.
Technically speaking, we introduce local coherent regularized terms to encourage both prediction- and label-level local consistency in the assignments.
Furthermore, to avoid generating incorrect labels in the early stages of training or cases with high noise ratios, we devise Curriculum and Structure-aware Optimal Transport (CSOT) based on SOT.
CSOT constructs a robust denoising and relabeling allocator by relaxing one of the equality constraints to allow only a fraction of the samples with the highest confidence to be selected.
These samples are then assigned with reliable pseudo labels.
The allocator progressively selects and relabels batches of high-confidence samples based on an increasing budget factor that controls the number of selected samples. 
Notably, CSOT is a new OT formulation with a nonconvex objective function and curriculum constraints, so it is significantly different from the classical OT formulations.
Hence, to solve CSOT efficiently, we developed a lightspeed computational method that involves a scaling iteration within a generalized conditional gradient framework \cite{GeneralizedCG}.

Our contribution can be summarized as follows:
1) We tackle the denoising and relabeling problem in 
LNL from a new perspective, i.e. simultaneously considering the \textit{inter-} and \textit{intra-distribution} structure for generating superior pseudo labels using optimal transport.
2) To fully consider the intrinsic coherence structure of sample distribution, we propose a novel optimal transport formulation, namely Curriculum and Structure-aware Optimal Transport (CSOT), which constructs a robust denoising and relabeling allocator that mitigates error accumulation.
This allocator selects a fraction of high-confidence samples, which are then assigned  reliable labels with both \textit{global discriminability} and \textit{local coherence}. 
3) We further develop a lightspeed computational method that involves a scaling iteration within a generalized conditional gradient framework to efficiently solve CSOT. 
4) Extensive experiments demonstrate the superiority of our method over state-of-the-art methods in LNL.

\section{Related Work}
\paragraph{Learning with noisy labels.}
LNL is a well-studied field with numerous strategies having been proposed to solve this challenging problem, such as robust loss design \cite{zhang2018generalized, wang2019symmetric}, loss correction \cite{hendrycks2018using, F-correction}, loss re-weighting \cite{ren2018learning,zhang2021dualgraph} and sample selection \cite{Decoupling, Co-teaching, karim2022unicon}.
Currently, the methods that are delivering superior performance mainly involve learning from both selected clean labels and relabeled corrupted labels \cite{DivideMix, NCE}. 
The mainstream approaches for identifying clean labels typically rely on the small-loss criterion \cite{Co-teaching, Co-teaching_plus, JoCoR, INCV}.
These methods often model per-sample loss distributions using a Beta Mixture Model \cite{ma2011bayesian} or a Gaussian Mixture Model \cite{permuter2006study}, treating samples with smaller loss as clean ones \cite{LossModelling, JoCoR, DivideMix}.
The label correction methods, such as PENCIL \cite{PENCIL}, Selfie \cite{Selfie}, ELR \cite{ELR}, and DivideMix \cite{DivideMix}, typically adopt a pseudo-labeling strategy that leverages the DNN’s predictions to correct the labels.
However, these approaches evaluate each sample independently without considering the correlations among samples, which leads to a suboptimal identification and correction solution.
To this end, some work \cite{MOIT, NCE} attempt to leverage $k$-nearest neighbor predictions \cite{bahri2020deep} for clean identification and label correction.
Besides, to further select and correct noisy labels robustly, OT Cleaner \cite{OTCleaner}, as well as concurrent OT-Filter \cite{OT-filter}, designed to consider the global sample distribution by formulating pseudo-labeling as an optimal transport problem.
In this paper, we propose CSOT to construct a robust denoising and relabeling allocator that simultaneously considers both the global and local structure of sample distribution so as to generate better pseudo labels.

\paragraph{Optimal transport-based pseudo-labeling.}
OT is a constrained optimization problem that aims to find the optimal coupling matrix to map one probability distribution to another while minimizing the total cost \cite{kantorovich2006translocation}.
OT has been formulated as a pseudo-labeling (PL) technique for a range of machine learning tasks, including class-imbalanced learning \cite{SaR, guo2022learning, SoLar}, 
semi-supervised learning \cite{SLA, ConfidentSLA, SaR}, 
clustering \cite{Self-labelling, SwAV, UNO}, 
domain adaptation \cite{zheng2021group, UniOT},
label refinery \cite{zheng2021group, SoLar, OTCleaner, OT-filter}, and others.
Unlike prediction-based PL \cite{fixmatch}, OT-based PL optimizes the mapping samples to class centroids, while considering the global structure of the sample distribution in terms of marginal constraints instead of per-sample predictions.
For example, Self-labelling \cite{Self-labelling} and SwAV \cite{SwAV}, which are designed for self-supervised learning, both seek an optimal equal-partition clustering to avoid the model’s collapse.
In addition, because OT-based PL considers marginal constraints, it can also consider class distribution to solve class-imbalance problems \cite{SaR, guo2022learning, SoLar}.
However, these approaches only consider the inter-distribution matching of samples and classes but do not consider the intra-distribution coherence structure of samples.
By contrast, our proposed CSOT considers both the inter- and intra-distribution structure and generates superior pseudo labels for noise-robust learning.

\paragraph{Curriculum learning.} 
Curriculum learning (CL) attempts to gradually increase the difficulty of the training samples, allowing the model to learn progressively from easier concepts to more complex ones \cite{khan2011humans}.
CL has been applied to various machine learning tasks, including image classification \cite{jiang2015self, zhou2021curriculum}, and reinforcement learning \cite{narvekar2020curriculum, ao2021co}.
Recently, the combination of curriculum learning and pseudo-labeling has become popular in semi-supervised learning.
These methods mainly focus on dynamic confident thresholding \cite{freematch, guo2022class, Dash} instead of adopting a fixed threshold \cite{fixmatch}.
Flexmatch \cite{flexmatch} designs class-wise thresholds and lowers the thresholds for classes that are more difficult to learn. 
Different from dynamic thresholding approaches, SLA \cite{SLA} only assigns pseudo labels to easy samples gradually based on an OT-like problem.
In the context of LNL, CurriculumNet \cite{Curriculumnet} designs a curriculum by ranking the complexity of the data using its distribution density in a feature space.
Alternatively, RoCL \cite{RoCL} selects easier samples considering both the dynamics of the per-sample loss and the output consistency.
Our proposed CSOT constructs a robust denoising and relabeling allocator that gradually assigns high-quality labels to a fraction of the samples with the highest confidence. 
This encourages both global discriminability and local coherence in assignments.

\section{Preliminaries}
\paragraph{Optimal transport.}
Here we briefly recap the well-known formulation of OT. Given two probability simplex vectors $\bmAlpha$ and $\bmBeta$ indicating two distributions, as well as a cost matrix $\mathbfC \in \mathbb{R}^{|\bmAlpha| \times |\bmBeta|}$, where $|\bmAlpha|$ denotes the dimension of $\bmAlpha$,
OT aims to seek the optimal coupling matrix $\mathbfQ$ by minimizing the following objective
\begin{equation}
    \label{Equation:OT}
        % \text{OT}(\mathbfC, \bmAlpha, \bmBeta)=
        \min_{\mathbfQ\in\OTpolytope}
            \left<\mathbfC, \mathbfQ\right>,
\end{equation}
where $\left<\cdot,\cdot\right>$ denote Frobenius dot-product.
The coupling matrix $\mathbfQ$ satisfies the polytope
$\OTpolytope=\left\{\mathbfQ\in\mathbb{R}_{+}^{|\bmAlpha| \times |\bmBeta|}
|\mathbfQ\ones_{|\bmBeta|}=\bmAlpha,\;
\mathbfQ^\top\ones_{|\bmAlpha|}=\bmBeta
\right\}$, where $\bmAlpha$ and $\bmBeta$ are essentially marginal probability vectors.
Intuitively speaking, these two marginal probability vectors can be interpreted as coupling budgets, which control the mapping intensity of each row and column in $\mathbfQ$.

\paragraph{Pseudo-labeling based on optimal transport.}
Let $\mathbfP\in\mathbb{R}_{+}^{B \times C}$ denote classifier softmax predictions, where $B$ is the batch size of samples, and $C$ is the number of classes.
The OT-based PL considers mapping samples to class centroids and the cost matrix $\mathbfC$ can be formulated as $-\log\mathbfP$ \cite{SLA, SoLar}. 
We can rewrite the objective for OT-based PL based on Problem (\ref{Equation:OT}) as follows
\begin{equation}
    \label{eq:OTPL}
    \min_{\mathbfQ\in\mathbfPi(\frac{1}{B}\ones_{B},
        \frac{1}{C}\ones_{C})}
                \left<-\log\mathbfP, \mathbfQ\right>,
\end{equation}
where $\ones_{d}$ indicates a $d$-dimensional vector of ones.
The pseudo-labeling matrix can be obtained by normalization: $B\mathbfQ$.
Unlike prediction-based PL \cite{fixmatch} which evaluates each sample independently, OT-based PL considers inter-distribution matching of samples and classes, as well as the global structure of sample distribution, thanks to the equality constraints. 

\paragraph{Sinkhorn algorithm for classical optimal transport problem.} 
Directly optimizing the exact OT problem would be time-consuming, and an entropic regularization term is introduced \cite{sinkhorn}: 
$\min_{\mathbfQ\in
\mathbfPi(\bmAlpha,\bmBeta)}
    \left<\mathbfC, \mathbfQ\right>
+\varepsilon\left<\mathbfQ, \log\mathbfQ\right>$
, where $\varepsilon>0$. The entropic regularization term enables OT to be approximated efficiently by the Sinkhorn algorithm \cite{sinkhorn}, which involves matrix scaling iterations executed efficiently by matrix multiplication on GPU.

\begin{figure*}
\begin{center}
\includegraphics[width=0.95\columnwidth]{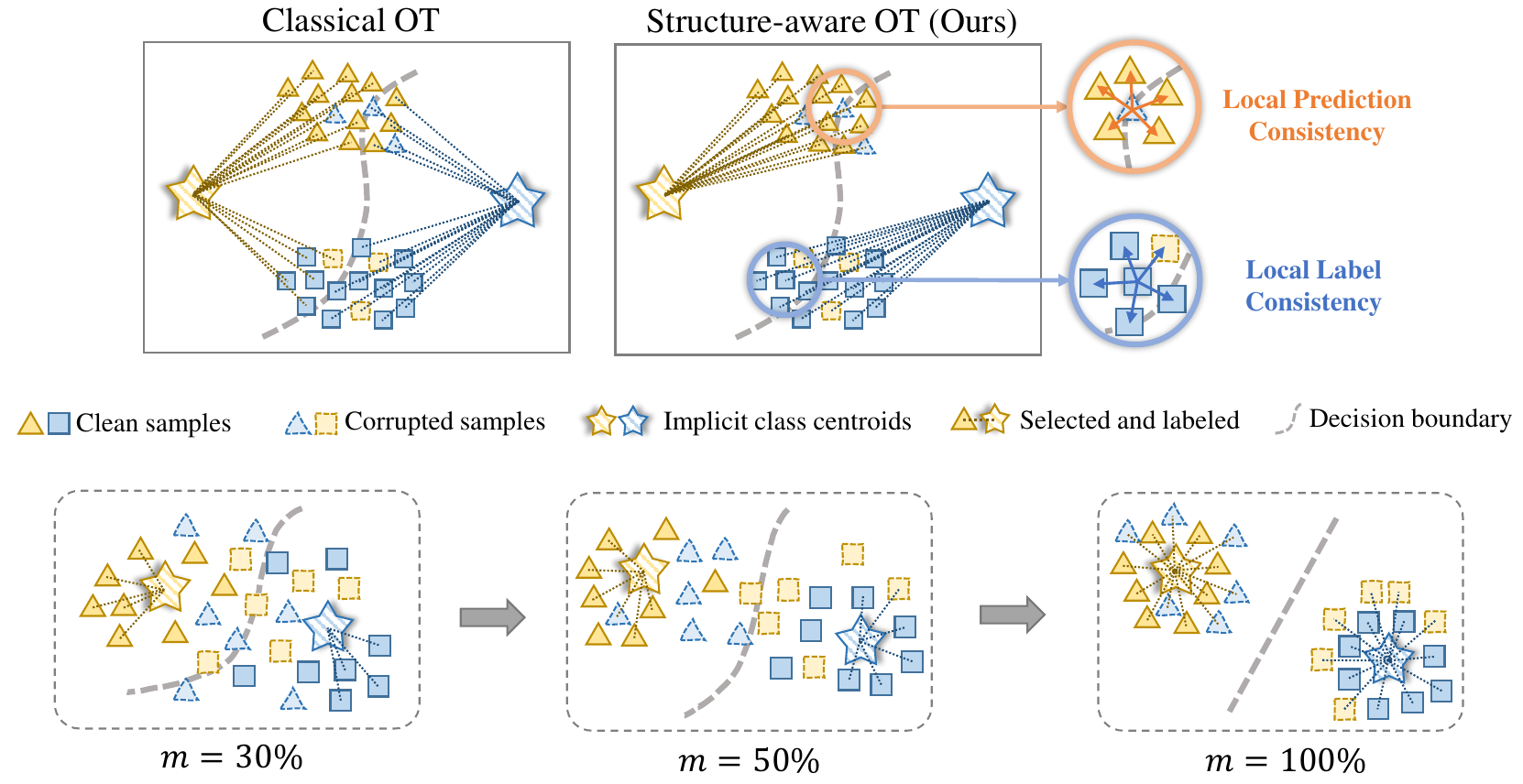}
\end{center}
\vspace{-0.3em}
\caption{\textbf{(Top) Comparison between classical OT and our proposed Structure-aware OT.} 
Classical OT tends to mismatch two nearby samples to two far-away class centroids when the decision boundary is not accurate enough. To mitigate this, our SOT generates local consensus assignments for each sample by preserving prediction-level and label-level consistency. Notably, for vague samples located near the ambiguous decision boundary, SOT rectifies their assignments based on the neighborhood majority consistency.
\textbf{(Bottom) The illustration of our curriculum denoising and relabeling based on proposed CSOT.} The decision boundary refers to the surface that separates two classes by the classifier.
The $m$ represents the curriculum budget that controls the number of selected samples and progressively increases during the training process.}
\label{fig:overview}
\vspace{-1.2em}
\end{figure*}

\section{Methodology}

\paragraph{Problem setup.}
Let $\mathcal{D}_{train}=\{(\mathbf{x}_i,y_i)\}_{i=1}^{N}$ denote the noisy training set, where $\mathbf{x}_i$ is an image with its associated label $y_i$ over $C$ classes, but whether the given label is accurate or not is unknown.
We call the correctly-labeled ones as \textit{clean}, and the mislabeled ones as \textit{corrupted}. 
LNL aims to train a network that is robust to corrupted labels and achieves high accuracy on a clean test set.

\subsection{Structure-Aware Optimal Transport for Denoising and Relabeling}
Even though existing OT-based PL considers the global structure of sample distribution, the intrinsic coherence structure of the samples is ignored. 
Specifically, the cost matrix in OT relies on pairwise metrics and thus two nearby samples could be mapped to two far-away class centroids.
To further consider the intrinsic coherence structure, we propose a Structure-aware Optimal Transport (SOT) for denoising and relabeling, which promotes local consensus assignment by encouraging prediction-level and label-level consistency, as shown in Fig. \ref{fig:overview}.

Our proposed SOT for denoising and relabeling is formulated by adding two local coherent regularized terms based on Problem (\ref{eq:OTPL}). 
Given a cosine similarity $\mathbfS \in \mathbb{R}^{B \times B}$ among samples in feature space, a one-hot label matrix $\mathbfL\in\mathbb{R}^{B \times C}$ transformed from given noisy labels, and a softmax prediction matrix $\mathbfP\in\mathbb{R}_{+}^{B \times C}$,  SOT is formulated as follows
\begin{equation}
    \label{eq:NCOTPL}
    \min_{\mathbfQ\in\mathbfPi(\frac{1}{B}\ones_{B},
        \frac{1}{C}\ones_{C})}
        % \tikzmarknode{class}{\highlight{red}{
                \left<-\log\mathbfP, \mathbfQ\right>
            % }}
        +\kappa
            \left(
            % \tikzmarknode{NCP}{\highlight{teal}{
            \Omega^{\mathbfP}(\mathbfQ)
            % }} 
            + 
            % \tikzmarknode{NCL}{\highlight{blue}{
            \Omega^{\mathbfL}(\mathbfQ)
            % }}
            \right),
\end{equation}
where the local coherent regularized terms $\Omega^{\mathbfP}$ and $\Omega^{\mathbfL}$ 
encourages prediction-level and label-level local consistency respectively, and are defined as follows
\begin{equation}
    \Omega^{\mathbfP}(\mathbfQ)
    =-\sum_{i,j}\mathbfS_{ij}
    \sum_{k} \mathbfP_{ik}\mathbfP_{jk}\mathbfQ_{ik}\mathbfQ_{jk}
    =-\left<\mathbfS, 
    \left(\mathbfP\odot\mathbfQ\right)
    \left(\mathbfP\odot\mathbfQ\right)^\top\right>,
\end{equation}
\begin{equation}
    \Omega^{\mathbfL}(\mathbfQ)
    =-\sum_{i,j}\mathbfS_{ij}
    \sum_{k} \mathbfL_{ik}\mathbfL_{jk}\mathbfQ_{ik}\mathbfQ_{jk}
    =-\left<\mathbfS, 
    \left(\mathbfL\odot\mathbfQ\right)
    \left(\mathbfL\odot\mathbfQ\right)^\top\right>,
\end{equation}
where $\odot$ indicates element-wise multiplication.
To be more specific, $\Omega^{\mathbfP}$ encourages assigning larger weight to $\mathbfQ_{ik}$ and $\mathbfQ_{jk}$ if the $i$-th sample is very close to the $j$-th sample, and their predictions $\mathbfP_{ik}$ and $\mathbfP_{jk}$ from the $k$-th class centroid are simultaneously high.
Analogously, $\Omega^{\mathbfL}$ encourages assigning larger weight to those samples whose neighborhood label consistency is rather high.
Unlike the formulation proposed in \cite{structuredOT, chuang2022infoot}, which focuses on sample-to-sample mapping, our method introduces a sample-to-class mapping that leverages the intrinsic coherence structure within the samples.

\subsection{Curriculum and Structure-Aware Optimal Transport for Denoising and Relabeling}

In the early stages of training or in scenarios with a high noise ratio, the predictions and feature representation would be vague and thus lead to the wrong assignments for SOT.
For the purpose of robust clean label identification and corrupted label correction, we further propose a Curriculum and Structure-aware Optimal Transport (CSOT), which constructs a robust curriculum allocator.
This curriculum allocator gradually selects a fraction of the samples with high confidence from the noisy training set, controlled by a budget factor, then assigns reliable pseudo labels for them.

Our proposed CSOT for denoising and relabeling is formulated by introducing new curriculum constraints based on SOT in Problem (\ref{eq:NCOTPL}).
Given curriculum budget factor $m\in[0,1]$, our CSOT seeks optimal coupling matrix $\mathbfQ$ by minimizing following objective
\begin{equation}
\begin{aligned}
\label{eq:CSOT}
&\min_{\mathbfQ}
        \left<-\log\mathbfP, \mathbfQ\right>
        +\kappa
            \left(
            \Omega^{\mathbfP}(\mathbfQ)
            + 
            \Omega^{\mathbfL}(\mathbfQ)
            \right)
\\
\text{s.t.} \quad
\mathbfQ \in 
&\left\{\mathbfQ\in\mathbb{R}_{+}^{B \times C}
        |\mathbfQ\ones_{C}
        \leq
        % \highlight{orange}{\leq}
        \frac{1}{B}\ones_{B},
        \mathbfQ^\top\ones_{B}=\frac{
        m
        % \highlight{orange}{m}
        }{C}\ones_{C}
        \right\}.
\end{aligned}
\end{equation}

Unlike SOT, which enforces an equality constraint on the samples, CSOT relaxes this constraint and defines the total coupling budget as $m \in [0,1]$, where $m$ represents the expected total sum of $\mathbfQ$.
Intuitively speaking, $m=0.5$ indicates that top $50\%$ confident samples are selected from all the classes, avoiding only selecting the same class for all the samples within a mini-batch. 
And the budget $m$ progressively increases during the training process, as shown in Fig. \ref{fig:overview}.

Based on the optimal coupling matrix $\mathbfQ$ solved from Problem (\ref{eq:CSOT}), we can obtain pseudo label by argmax operation, \ie $\hat{y}_i=\argmax_j \mathbfQ_{ij}$.
In addition, we define the general confident scores of samples as $\mathcal{W}=\{w_0,w_1,\cdots, w_{B-1}\}$, where $w_i=\mathbfQ_{i\hat{y}_i}/(m/C)$.
Since our curriculum allocator assigns weight to only a fraction of samples controlled by $m$, we use \texttt{topK}($\mathcal{S}$,$k$) operation (return top-$k$ indices of input set $\mathcal{S}$) to identify selected samples denoted as $\delta_i$
\begin{align}
\delta_i=\left\{
    \begin{aligned}
    1, &\quad  i \in \texttt{topK}(\mathcal{W},
    \left\lfloor mB \right\rfloor )\\
    0, &\quad\text{otherwise}
    \end{aligned}
    \right.
    ,
\end{align}
where $\left\lfloor \cdot \right\rfloor$ indicates the round down operator.
Then the noisy dataset $\mathcal{D}_{train}$ can be splited into $\mathcal{D}_{clean}$ and $\mathcal{D}_{corrupted}$ as follows
\begin{align}
\label{eq:dataset_split}
\begin{aligned}
\mathcal{D}_{clean}\leftarrow
\left\{(\mathbf{x}_i,y_i,w_i)|
\hat{y}_i=y_i, \delta_i=1, \forall(\mathbf{x}_i,y_i)\in \mathcal{D}_{train}
\right\},
\\
\mathcal{D}_{corrupted}\leftarrow
\left\{(\mathbf{x}_i,\hat{y}_i,w_i)|
\hat{y}_i \neq y_i, \forall(\mathbf{x}_i,y_i)\in \mathcal{D}_{train}
\right\}.
\end{aligned}
\end{align}

\subsection{Training Objectives}
To avoid error accumulation in the early stage of training, we adopt a two-stage training scheme.
In the first stage, the model is supervised by progressively selected clean labels and self-supervised by unselected samples.
In the second stage, the model is semi-supervised by all denoised labels.
Notably, we construct our training objective mainly based on Mixup loss $\mathcal{L}^{mix}$ and Label consistency loss $\mathcal{L}^{lab}$ same as NCE \cite{NCE}, and a self-supervised loss $\mathcal{L}^{simsiam}$ proposed in SimSiam \cite{simsiam}.
The detailed formulations of mentioned loss and training process are given in Appendix.
% \textbf{Mixup loss.}
% \textbf{Label consistency loss.}
% \textbf{SimSiam loss.}
Our two-stage training objective can be constructed as follows
\begin{equation}
\label{eq:L_sup}
    \mathcal{L}^{sup}=
    \mathcal{L}^{mix}_{\mathcal{D}_{clean}} + \mathcal{L}^{lab}_{\mathcal{D}_{clean}} + \lambda_1\mathcal{L}^{simsiam}_{\mathcal{D}_{corrupted}},
\end{equation}
\begin{equation}
\label{eq:L_semi}
    \mathcal{L}^{semi}=
    \mathcal{L}^{mix}_{\mathcal{D}_{clean}} + \mathcal{L}^{lab}_{\mathcal{D}_{clean}} +
    \lambda_2\mathcal{L}^{lab}_{\mathcal{D}_{corrupted}}.
\end{equation}

\section{Lightspeed Computation for CSOT}
The proposed CSOT is a new OT formulation with nonconvex objective function and curriculum constraints, which cannot be solved directly by classical OT solvers. 
To this end, we develop a lightspeed computational method that involves a scaling iteration within a generalized conditional gradient framework to solve CSOT efficiently. 
Specifically, we first introduce an efficient scaling iteration for solving the OT problem with curriculum constraints without considering the local coherent regularized terms, \ie Curriculum OT (COT). 
Then, we extend our approach to solve the proposed CSOT problem, which involves a nonconvex objective function and curriculum constraints.

\subsection{Solving Curriculum Optimal Transport} \label{sec:ER-spOT}

For convenience, we formulate curriculum constraints in Probelm (\ref{eq:CSOT}) in a more general form.
Given two vectors $\bmAlpha$ and $\bmBeta$ that satisfy $\left\|\bmAlpha\right\|_1 \geq \left\|\bmBeta\right\|_1 =m$, a general polytope of curriculum constraints $\cOTpolytope$ is formulated as
\begin{equation}
    \label{Equation:sp_transport_polytope}
    \cOTpolytope=\left\{\mathbfQ\in\mathbb{R}_{+}^{|\bmAlpha| \times |\bmBeta|}
        |\mathbfQ\ones_{|\bmBeta|}\leq\bmAlpha,
        \mathbfQ^\top\ones_{|\bmAlpha|}=\bmBeta
        \right\}.
\end{equation}

For the efficient computation purpose, we consider an entropic regularized version of COT
\begin{equation}
    \label{eq:semi_partialOT_entropic}
        \min_{\mathbfQ\in\cOTpolytope}
            \left<\mathbfC, \mathbfQ\right>
        +\varepsilon\left<\mathbfQ, \log\mathbfQ\right>,
\end{equation}
where we denote the cost matrix $\mathbfC:=-\log\mathbfP$ in Probelm (\ref{eq:CSOT}) for simplicity.
Inspired by \cite{benamou2015iterative}, Problem (\ref{eq:semi_partialOT_entropic}) can be easily re-written as the Kullback-Leibler (KL) projection: 
$\min_{\mathbfQ\in\cOTpolytope}
\varepsilon\text{KL}(\mathbfQ|e^{-\mathbfC/\varepsilon})$.
Besides, the polytope $\cOTpolytope$ can be expressed as an intersection of two convex but not affine sets, \ie
\begin{equation}
\setC_{1}\defeq\left\{\mathbfQ\in\mathbb{R}_{+}^{|\bmAlpha| \times |\bmBeta|}
|\mathbfQ\ones_{|\bmBeta|}\leq\bmAlpha
\right\}
\qandq
\setC_{2}\defeq\left\{\mathbfQ\in\mathbb{R}_{+}^{|\bmAlpha| \times |\bmBeta|}
|\mathbfQ^\top\ones_{|\bmAlpha|}=\bmBeta
\right\}.
\end{equation}

In light of this, Problem (\ref{eq:semi_partialOT_entropic}) can be solved by performing iterative KL projection between $\setC_{1}$ and $\setC_{2}$, namely Dykstra's algorithm \cite{dykstra1983algorithm} shown in Appendix.

\begin{lemma} 
\label{lemma:sinkhorn-like}
(Efficient scaling iteration for Curriculum OT)
When solving Problem (\ref{eq:semi_partialOT_entropic})
by iterating Dykstra's algorithm,
the matrix $\mathbfQ^{(n)}$ at $n$ iteration is a diagonal scaling of $\mathbfK:=e^{-\mathbfC/\varepsilon}$, which is the element-wise exponential matrix of $-\mathbfC/\varepsilon$:
\begin{equation}
\mathbfQ^{(n)}={\rm{diag}}\left(\bmu^{(n)}\right)
                \mathbfK
               {\rm{diag}}\left(\bmv^{(n)}\right),
\end{equation}
where the vectors $\bmu^{(n)}\in \mathbb{R}^{|\bmAlpha|}$, $\bmv^{(n)} \in\mathbb{R}^{|\bmBeta|}$ satisfy $\bmv^{(0)}=\ones_{|\bmBeta|}$ and follow the recursion formula
\begin{equation}
\bmu^{(n)}=\min\left(\frac{\bmAlpha}{\mathbfK\bmv^{(n-1)}},\ones_{|\bmAlpha|}\right)
\quad {\rm{and}} \quad
\bmv^{(n)}=\frac{\bmBeta}{\mathbfK^\top\bmu^{(n)}}.
\end{equation}
\end{lemma}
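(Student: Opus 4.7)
The plan is to derive the claimed scaling form by running Dykstra's algorithm with Bregman (KL) projections explicitly and then identifying, by induction on the Dykstra iteration index, the diagonal structure of each iterate against $\mathbfK=e^{-\mathbfC/\varepsilon}$. First I would record the closed-form projections: for any nonnegative matrix $M$, the KL projection onto $\setC_{2}$ rescales column $j$ by $\bmBeta_{j}/(M^{\top}\ones_{|\bmAlpha|})_{j}$, while the projection onto $\setC_{1}$ rescales row $i$ by $\min(1,\bmAlpha_{i}/(M\ones_{|\bmBeta|})_{i})$; each is therefore a one-sided diagonal scaling whose factors depend only on the row or column marginal of the input.

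Next I would set up the multiplicative Dykstra iteration of \cite{benamou2015iterative}, namely $p_{k}=P^{KL}_{\setC_{[k]}}(p_{k-1}\odot y_{k-2})$ with corrections $y_{k}=(p_{k-1}\odot y_{k-2})/p_{k}$, initialized by $p_{0}=\mathbfK$ and $y_{-1}=y_{0}=\ones$, alternating $[k]\in\{1,2\}$. Indexing a full Dykstra round (one projection onto each set) by $n$, I identify $\mathbfQ^{(n)}:=p_{2n}$ and carry as a bookkeeping invariant that $y_{2n-1}$ is row-constant with value $1/\bmu^{(n)}$ along each row and $y_{2n}$ is column-constant with value $1/\bmv^{(n)}$ along each column.

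The induction step is the core computation. Assume $\mathbfQ^{(n-1)}={\rm diag}(\bmu^{(n-1)})\mathbfK\,{\rm diag}(\bmv^{(n-1)})$ together with the invariant just stated. In $\mathbfQ^{(n-1)}\odot y_{2n-3}$ the previous row factors $\bmu^{(n-1)}$ are exactly cancelled, leaving $\mathbfK\,{\rm diag}(\bmv^{(n-1)})$ whose row sums are $\mathbfK\bmv^{(n-1)}$; projecting onto $\setC_{1}$ then multiplies row $i$ by $\min(1,\bmAlpha_{i}/(\mathbfK\bmv^{(n-1)})_{i})$, which is exactly the lemma's $\bmu^{(n)}$. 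Symmetrically, in $p_{2n-1}\odot y_{2n-2}$ the column factors $\bmv^{(n-1)}$ cancel, leaving ${\rm diag}(\bmu^{(n)})\mathbfK$ whose column sums are $\mathbfK^{\top}\bmu^{(n)}$, so the $\setC_{2}$-projection multiplies column $j$ by $\bmBeta_{j}/(\mathbfK^{\top}\bmu^{(n)})_{j}$, giving $\bmv^{(n)}$. Plugging into the correction updates shows the new $y_{2n-1}$ and $y_{2n}$ are row-constant $1/\bmu^{(n)}$ and column-constant $1/\bmv^{(n)}$ respectively, closing the induction; the base case $n=1$ is immediate using $\bmv^{(0)}=\ones$.

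I expect the main obstacle to be precisely the cancellation structure of the Dykstra corrections. Without them, plain alternating KL projections would compound scalings into something like $\bmu^{(n)}=\bmu^{(n-1)}\odot\min(1,\bmAlpha/(\bmu^{(n-1)}\odot\mathbfK\bmv^{(n-1)}))$ rather than the clean Sinkhorn-like form stated. What saves the argument is that projection onto the half-space $\setC_{1}$ introduces only a row-constant scaling (symmetrically for $\setC_{2}$ and columns), so both the scalings and their Dykstra corrections remain rank-one, and each correction on one side exactly inverts only the \emph{most recent} scaling introduced on that same side. Verifying this rank-one invariant for the corrections at every step, and thereby justifying the exact cancellation used above, is the piece I would state most carefully.
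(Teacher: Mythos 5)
Your proposal is correct and follows essentially the same route as the paper's proof: identify the closed-form KL projections onto $\setC_1$ (row rescaling capped by $\min$) and $\setC_2$ (column rescaling), run the multiplicative Dykstra iteration, and observe that the correction terms stay rank-one (row-constant for the $\setC_1$ side, column-constant for the $\setC_2$ side), so each correction exactly cancels the most recent same-side scaling, collapsing the iterates to the Sinkhorn-like two-vector recursion. The only difference is presentational: the paper unrolls the first two Dykstra rounds by explicit matrix computation and then asserts the pattern, whereas you frame the same cancellation as a formal induction with an explicitly stated invariant on the correction terms, which is arguably the cleaner way to write it.
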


The proof is given in the Appendix.
Lemma \ref{lemma:sinkhorn-like} allows a fast implementation of Dykstra's algorithm by only performing matrix-vector multiplications.
This scaling iteration for entropic regularized COT is very similar to the widely-used and efficient Sinkhorn Algorithm \cite{sinkhorn}, as shown in Algorithm \ref{algo:fast_dykstra}.
 
\begin{algorithm}[htb]
   \caption{Efficient scaling iteration for entropic regularized Curriculum OT}
   \label{algo:fast_dykstra}
   \begin{algorithmic}[1]
    \STATE {\bfseries Input:} Cost matrix $\mathbfC$, marginal constraints vectors $\bmAlpha$ and $\bmBeta$, entropic regularization weight $\varepsilon$\\
    \STATE Initialize: $\mathbfK \gets e^{-\mathbfC/\varepsilon}$, $\bmv^{(0)} \gets \ones_{|\bmBeta|}$\\
    \STATE Compute: $\mathbfK_{\bmAlpha} \gets \frac{\mathbfK}{\diag(\bmAlpha)\ones_{|\bmAlpha| \times |\bmBeta|}}$, 
    $\mathbfK_{\bmBeta}^\top \gets \frac{\mathbfK^\top}{\diag(\bmBeta)\ones_{|\bmBeta| \times |\bmAlpha|}}$
    \texttt{// Saving computation}\\
	\FOR{$n=1,2,3, \ldots$}
	\STATE $\bmu^{(n)} \gets
            \min \left(\frac{\ones_{|\bmAlpha|}}{\mathbfK_{\bmAlpha} \bmv^{(n-1)}}
            ,\ones_{|\bmAlpha|}\right)$ \label{line:fast_u}\\
        \STATE $\bmv^{(n)} \gets
            \frac{\ones_{|\bmBeta|}}
            {\mathbfK_{\bmBeta}^\top \bmu^{(n)}}$
	\ENDFOR
    \STATE {\bfseries Return:} $\diag(\bmu^{(n)}) \mathbfK \diag(\bmv^{(n)})$
\end{algorithmic}
\end{algorithm}

\subsection{Solving Curriculum and Structure-Aware Optimal Transport}
In the following, we propose to solve CSOT within a Generalized Conditional Gradient (GCG) algorithm \cite{GeneralizedCG} framework, which strongly relies on computing Curriculum OT by scaling iterations in Algorithm \ref{algo:fast_dykstra}.
The conditional gradient algorithm \cite{frank1956algorithm, jaggi2013revisiting}
has been used for some penalized OT problems \cite{ferradans2014regularized, OTDA} or nonconvex Gromov-Wasserstein distances \cite{peyre2016gromov, vayer2020fused, chapel2020partial}, which can be used to solve Problem (\ref{eq:NCOTPL}) directly.

For simplicity, we denote the local coherent regularized terms as $\Omega(\cdot):=\Omega^\mathbfP(\cdot)+\Omega^\mathbfL(\cdot)$, and give an entropic regularized CSOT formulation as follows:
\begin{equation}
    \label{problem:er-spot-art}
        \min_{\mathbfQ\in\cOTpolytope}
            \left<\mathbfC, \mathbfQ\right>
        +\kappa\Omega(\mathbfQ)
        +\varepsilon\left<\mathbfQ, \log\mathbfQ\right>.
\end{equation}
Since the local coherent regularized term $\Omega^\mathbfP(\cdot)$ is differentiable, Problem (\ref{problem:er-spot-art}) can be solved within the GCG algorithm framework, shown in Algorithm \ref{algo:gcg_spOT}.
And the linearization procedure in Line \ref{line:linearization} can be computed efficiently by the scaling iteration proposed in Sec \ref{sec:ER-spOT}.

\begin{algorithm}[htb]
   \caption{Generalized conditional gradient algorithm for entropic regularized CSOT}
   \label{algo:gcg_spOT}
   \begin{algorithmic}[1]
    \STATE {\bfseries Input:} Cost matrix $\mathbfC$, marginal constraints vectors $\bmAlpha$ and $\bmBeta$, entropic regularization weight $\varepsilon$, local coherent regularization weight $\kappa$, local coherent regularization function $\Omega:\mathbb{R}^{|\bmAlpha| \times |\bmBeta|} \to \mathbb{R}$, and its gradient function $\nabla\Omega:\mathbb{R}^{|\bmAlpha| \times |\bmBeta|} \to \mathbb{R}^{|\bmAlpha| \times |\bmBeta|}$\\
    \STATE Initialize: $\mathbfQ^{(0)} \gets \bmAlpha \bmBeta^T$\\
	\FOR{$i=1,2,3, \ldots$}
        \STATE $\mathbfG^{(i)} \gets
            \mathbfQ^{(i)}+
            \kappa\nabla\Omega(\mathbfQ^{(i)})$ 
            \texttt{// Gradient computation}\\
	\STATE $\widetilde{\mathbfQ}^{(i)} \gets       
            \argmin_{\mathbfQ\in\cOTpolytope} 
            \left<\mathbfQ, \mathbfG^{(i)}\right>
            +\varepsilon\left<\mathbfQ, \log\mathbfQ\right>$ 
             \newline\texttt{// Linearization, solved efficiently by Algorithm \ref{algo:fast_dykstra}}  \label{line:linearization}\\
        \STATE Choose $\eta^{(i)}\in [0,1]$ so that it satisfies the Armijo rule
            \texttt{// Backtracking line-search}\\
        \STATE $\mathbfQ^{(i+1)} \gets
            \left(1-\eta^{(i)}\right)\mathbf{Q}^{(i)}
            +\eta^{(i)}\widetilde{\mathbfQ}^{(i)}$
            \texttt{// Update}\\
	\ENDFOR
    \STATE {\bfseries Return:} $\mathbfQ^{(i)}$
\end{algorithmic}
\end{algorithm}
% \vspace{-1em}

\section{Experiments}\label{Experiments}

\subsection{Implementation Details}
We conduct experiments on three standard LNL benchmark datasets: CIFAR-10 \cite{CIFAR}, CIFAR-100 \cite{CIFAR} and Webvision \cite{webvision}.
We follow most implementation details from the previous work DivideMix \cite{DivideMix} and NCE \cite{NCE}.
Here we provide some specific details of our approach.
The warm-up epochs are set to 10/30/10 for CIFAR-10/100/Webvision respectively.
For CIFAR-10/100, the supervised learning epoch $T_{sup}$ is set to $250$, and the semi-supervised learning epoch $T_{semi}$ is set to $200$.
For Webvision, $T_{sup}=80$ and $T_{semi}=70$.
For all experiments, we set $\lambda_1=1$, $\lambda_2=1$, $\varepsilon=0.1$, $\kappa=1$.
And we adopt a simple linear ramp for curriculum budget, \ie $m=\min(1.0, m_0+\frac{t-1}{T_{sup}-1})$ with an initial budget $m_0=0.3$.
For the GCG algorithm, the number of outer loops is set to 10, and the number for inner scaling iteration is set to 100.
The batch size $B$ for denoising and relabeling is set to $1024$.
More details will be provided in Appendix.

\begin{table}
\centering
\caption{\textbf{Comparison with state-of-the-art methods in test accuracy (\%) on CIFAR-10 and CIFAR-100.} 
The results are mainly copied from \cite{NCE,RRL}.
We present the performance of our CSOT method using the "mean$\pm$variance" format, which is obtained from 3 trials with different seeds.}
\label{tab:cifar}
\resizebox{\linewidth}{!}{ %< auto-adjusts font size to fill line
\begin{tabular}{r|ccccc|cccc} 
\toprule
Dataset              & \multicolumn{5}{c|}{CIFAR-10}                                                                                              & \multicolumn{4}{c}{CIFAR-100}                                                             \\
Noise type           & \multicolumn{4}{c}{Symmetric}                                                                     & Assymetric             & \multicolumn{4}{c}{Symmetric}                                                             \\
Method/Noise ratio   & 0.2                    & 0.5                    & 0.8                    & 0.9                    & 0.4                    & 0.2           & 0.5                    & 0.8                    & 0.9                     \\ 
\midrule
Cross-Entropy        & 86.8                   & 79.4                   & 62.9                   & 42.7                   & 85.0                   & 62.0          & 46.7                   & 19.9                   & 10.1                    \\
F-correction \cite{F-correction}         & 86.8                   & 79.8                   & 63.3                   & 42.9                   & 87.2                   & 61.5          & 46.6                   & 19.9                   & 10.2                    \\
Co-teaching+ \cite{Co-teaching}         & 89.5                   & 85.7                   & 67.4                   & 47.9                   & -                      & 65.6          & 51.8                   & 27.9                   & 13.7                    \\
PENCIL \cite{PENCIL}               & 92.4                   & 89.1                   & 77.5                   & 58.9                   & 88.5                   & 69.4          & 57.5                   & 31.1                   & 15.3                    \\
DivideMix \cite{DivideMix}            & 96.1                   & 94.6                   & 93.2                   & 76.0                   & 93.4                   & 77.3          & 74.6                   & 60.2                   & 31.5                    \\
ELR \cite{ELR}                 & 95.8                   & 94.8                   & 93.3                   & 78.7                   & 93.0                   & 77.6          & 73.6                   & 60.8                   & 33.4                    \\
NGC \cite{NGC}                 & 95.9                   & 94.5                   & 91.6                   & 80.5                   & 90.6                   & 79.3          & 75.9                   & 62.7                   & 29.8                    \\
RRL \cite{RRL}                 & 96.4                   & 95.3                   & 93.3                   & 77.4                   & 92.6                   & 80.3          & 76.0                   & 61.1                   & 33.1                    \\
MOIT \cite{MOIT}                 & 93.1                   & 90.0                   & 79.0                   & 69.6                   & 92.0                   & 73.0          & 64.6                   & 46.5                   & 36.0                    \\
UniCon \cite{karim2022unicon}              & 96.0                   & 95.6                   & 93.9                   & \textbf{90.8}                   & 94.1                   & 78.9          & 77.6                   & 63.9                   & 44.8                    \\
NCE \cite{NCE}                  & 96.2                   & 95.3                   & 93.9                   & 88.4                   & 94.5                   & \textbf{81.4} & 76.3                   & 64.7                   & 41.1                    \\ 
\midrule
OT Cleaner \cite{OTCleaner}           & 91.4                   & 85.4                   & 56.9                   & -                      & -                      & 67.4          & 58.9                   & 31.2                   & -                       \\
OT-Filter \cite{OT-filter}           & 96.0                   & 95.3                   & 94.0                   & 90.5                   & 95.1                   & 76.7          & 73.8                   & 61.8                   & 42.8                    \\ 
\midrule
\textbf{CSOT (Best)} & \textbf{96.6$\pm$0.10} & \textbf{96.2$\pm$0.11} & \textbf{94.4$\pm$0.16} & 90.7$\pm$0.33 & \textbf{95.5$\pm$0.06} & 80.5$\pm$0.28 & \textbf{77.9$\pm$0.18} & \textbf{67.8$\pm$0.23} & \textbf{50.5$\pm$0.46}  \\
\textbf{CSOT (Last)} & 96.4$\pm$0.18          & 96.0$\pm$0.11          & 94.3$\pm$0.20          & 90.5$\pm$0.36          & 95.2$\pm$0.12          & 80.2$\pm$0.31 & 77.7$\pm$0.14          & 67.6$\pm$0.36          & 50.3$\pm$0.33           \\
\bottomrule
\end{tabular}
} %< \resizebox
\end{table}
  
\begin{table}
    \begin{minipage}{0.48\linewidth}
        \tabcaption{\textbf{Comparison with SOTA methods in top-1 / 5 test accuracy (\%) on the Webvision and ImageNet ILSVRC12 validation sets.} 
        % The models are trained on the training set of the Webvision dataset.
        }
        \label{tab:webvision}
        \centering
        \begin{small}
        \centering
        \resizebox{\linewidth}{!}{ %< auto-adjusts font size to fill line
        \begin{tabular}{r|cc|cc}
        \toprule
        \multicolumn{1}{l|}{}                              & \multicolumn{2}{c|}{Webvision}  & \multicolumn{2}{c}{ILSVRC12}     \\ 
% \arrayrulecolor{black}\cline{1-1}
Method                                             & top-1          & top-5          & top-1          & top-5           \\ 
\arrayrulecolor{black}\midrule
F-correction  \cite{F-correction} & 61.12          & 82.68          & 57.36          & 82.36           \\
Decoupling  \cite{Decoupling}     & 62.54          & 84.74          & 58.26          & 82.26           \\
MentorNet  \cite{Mentornet}       & 63.00          & 81.40          & 57.80          & 79.92           \\
Co-teaching  \cite{Co-teaching}   & 63.58          & 85.20          & 61.48          & 84.70           \\
DivideMix \cite{DivideMix}        & 77.32          & 91.64          & 75.20          & 90.84           \\
ELR  \cite{ELR}                   & 76.26          & 91.26          & 68.71          & 87.84           \\
ELR+  \cite{ELR}                  & 77.78          & 91.68          & 70.29          & 89.76           \\
NGC  \cite{NGC}                   & 79.20          & 91.80          & 74.40          & 91.00           \\
RRL  \cite{RRL}                   & 77.80          & 91.30          & 74.40          & 90.90           \\
UniCon \cite{karim2022unicon}                  & 77.60          & 93.44          & 75.29          & 93.72           \\
MOIT \cite{MOIT}                  & 77.90          & 91.90          & 73.80          & 91.70           \\
NCE  \cite{NCE}                   & 79.50          & \textbf{93.80} & 76.30          & \textbf{94.10}  \\ 
\midrule
\textbf{CSOT}                                      & \textbf{79.67} & 91.95          & \textbf{76.64} & 91.67          
        \\ \bottomrule
        \end{tabular}}
        \end{small}
    \end{minipage}%
    \hfill
    \begin{minipage}{0.48\linewidth}
    % \begin{minipage}{\linewidth}
    %     \vspace{-1em}
    %     \tabcaption{\textbf{Comparison of total training time (hours) on CIFAR-10.}
    %     }\label{tab:time}
    %     \centering
    %     \begin{small}
    %     \resizebox{0.6\linewidth}{!}{ %< auto-adjusts font size to fill line
    %     \label{tab:timecost}
    %     \begin{tabular}{lc}
    %     \toprule
    %     Method                                      & Time cost      \\ 
    %     \hline
    %     DivideMix\cite{DivideMix} & 5.1h             \\
    %     NCE\cite{NCE}             & 6.5h           \\
    %     \textbf{CSOT}                               & \textbf{4.8h} 
    %     \\ \bottomrule
    %     \end{tabular}}
    %     \end{small}
    % \end{minipage}
    % \begin{minipage}{0.48\linewidth}
    %   \centering
    %   \includegraphics[width=0.95\linewidth]{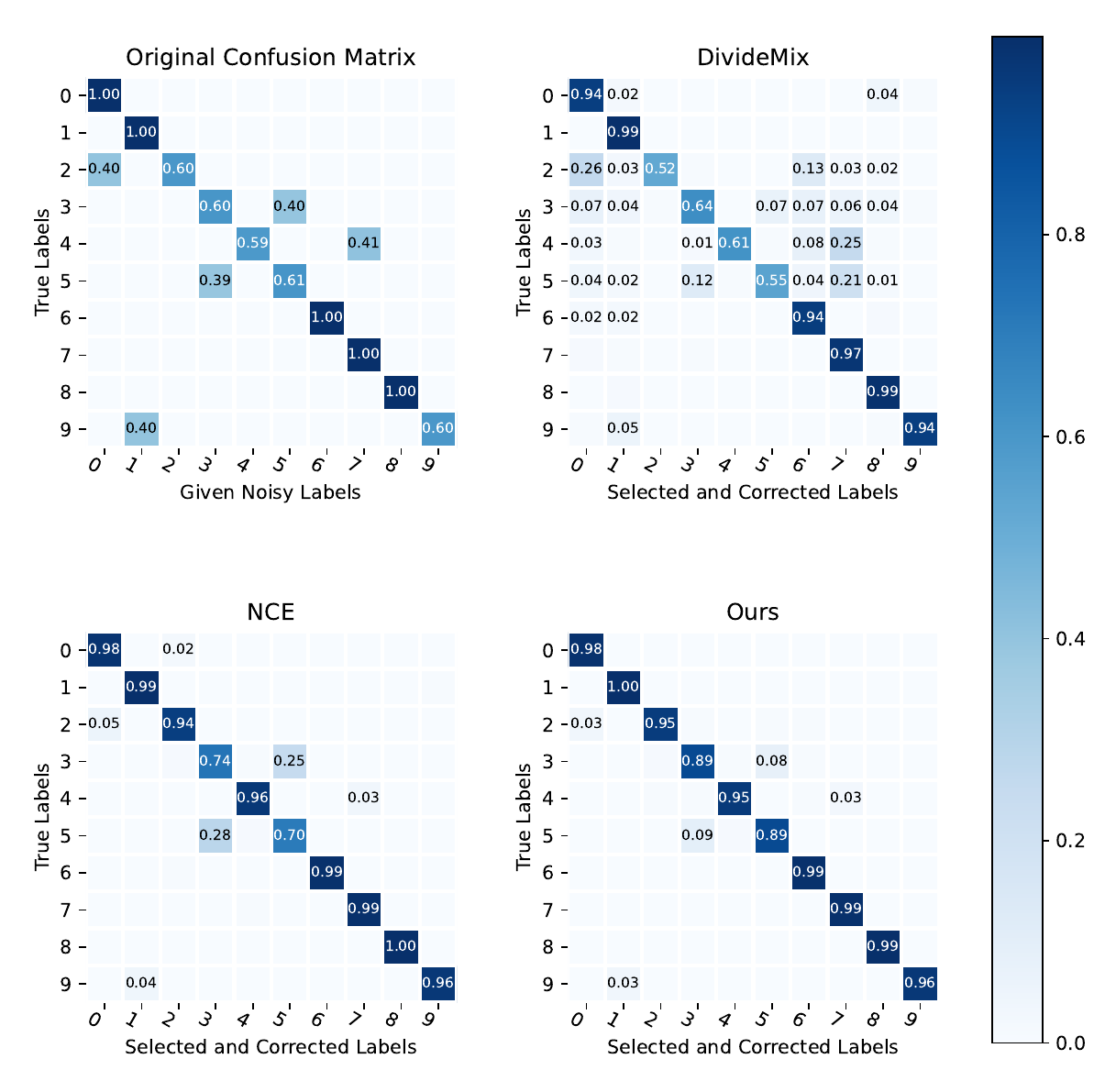}
    %   \figcaption{\textbf{Comparision of confusion matrix on CIFAR-10 assym-40\%.} 
    %   The darker the color on the diagonal elements of the matrix, the higher the accuracy.}\label{fig:confusion_matrix}
    % \end{minipage}
    % \begin{minipage}{\linewidth}
        \tabcaption{\textbf{Time cost (s) for solving CSOT optimization problem of different input sizes.} VDA indicates vanilla Dykstra’s algorithm-based CSOT solver, while ESI indicates the efficient scaling iteration-based solver.}
        \label{tab:CSOT_timecost}
        \centering
        \vspace{1.2em}
        \begin{small}
        \resizebox{\linewidth}{!}{ %< auto-adjusts font size to fill line
        \begin{tabular}{lcc} 
        \toprule
        $(|\bmAlpha|,|\bmBeta|)$ & VDA-based & ESI-based (Ours)  \\ 
        \hline
        (1024,10)                & 0.83        & \textbf{0.82} \textcolor{red}{$\downarrow$}         \\
        (1024,50)                & 1.00        & \textbf{0.80} \textcolor{red}{$\downarrow$}         \\
        (1024,100)               & 0.87        & \textbf{0.80} \textcolor{red}{$\downarrow$}         \\ 
        \hline
        (50,50)                  & 0.82        & \textbf{0.79} \textcolor{red}{$\downarrow$}         \\
        (100,100)                & 0.88        & \textbf{0.80} \textcolor{red}{$\downarrow$}         \\
        (500,500)                & 0.88        & \textbf{0.87} \textcolor{red}{$\downarrow$}         \\
        (1000,1000)              & 0.94        & \textbf{0.81} \textcolor{red}{$\downarrow$}         \\
        (2000,2000)              & 2.11        & \textbf{0.98} \textcolor{red}{$\downarrow$}         \\
        \textbf{(3000,3000)}              & 3.74        & \textbf{0.99} \textcolor{red}{$\downarrow$}         \\
        \bottomrule
        \end{tabular}}
        \end{small}
    \end{minipage}
    \vspace{-1em}
\end{table}

\subsection{Comparison with the State-of-the-Arts}
\paragraph{Synthetic noisy datasets.}
Our method is validated on two synthetic noisy datasets, \ie CIFAR-10 \cite{CIFAR} and CIFAR-100 \cite{CIFAR}.
Following \cite{DivideMix, NCE}, we conduct experiments with two types of label noise: \textit{symmetric} and \textit{asymmetric}. 
Symmetric noise is injected by randomly selecting a percentage of samples and replacing their labels with random labels. 
Asymmetric noise is designed to mimic the pattern of real-world label errors, \ie labels are only changed to similar classes (\eg cat$\leftrightarrow$dog).
As shown in Tab. \ref{tab:cifar}, our CSOT has surpassed all the state-of-the-art works across most of the noise ratios.
In particular, our CSOT outperforms the previous state-of-the-art method NCE \cite{NCE} by $2.3\%$, $3.1\%$ and $9.4\%$ under a high noise rate of CIFAR-10 sym-0.8, CIFAR-100 sym-0.8/0.9, respectively.

\paragraph{Real-world noisy datasets.}
Additionally, we conduct experiments on a large-scale dataset with real-world noisy labels, \ie WebVision \cite{webvision}.
WebVision contains 2.4 million images crawled from the web using the 1,000 concepts in ImageNet ILSVRC12 \cite{imagenet}.
Following previous works \cite{DivideMix, NCE}, we conduct experiments only using the first 50 classes of the Google image subset for a total of $\sim$61,000 images. 
As shown in Tab. \ref{tab:webvision}, our CSOT surpasses other methods in top-1 accuracy on both Webvision and ILSVRC12 validation sets, demonstrating its superior performance in dealing with real-world noisy datasets.
Even though NCE achieves better top-5 accuracy, it suffers from high time costs (using a single NVIDIA A100 GPU) due to the co-training scheme, as shown in Tab. \ref{tab:time}.

\subsection{Ablation Studies and Analysis}

\paragraph{Effectiveness of CSOT-based denoising and relabeling.}
To verify the effectiveness of each component in our CSOT, we conduct comprehensive ablation experiments, shown in Tab. \ref{tab:ablation}.
Compared to classical OT, Structure-aware OT, and Curriculum OT, our proposed CSOT has achieved superior performance.
Specifically, our proposed local coherent regularized terms $\Omega^\mathbfP$ and $\Omega^\mathbfQ$ indeed contribute to CSOT, as demonstrated in Tab \ref{tab:ablation} (c)(d)(e).
Furthermore, our proposed curriculum constraints yield an improvement of approximately $2\%$ for both classical OT and structure-aware OT, as shown in Tab \ref{tab:ablation} (a)(b)(c).
Particularly, under high noise ratios, the improvement can reach up to $4\%$, which demonstrates the effectiveness of the curriculum relabeling scheme.

\paragraph{Effectiveness of clean labels identification via CSOT.}
As shown in Tab. \ref{tab:ablation} (f), replacing our CSOT-based denoising and relabeling with GMM \cite{DivideMix} for clean label identification significantly degrades the model performance.
This phenomenon can be explained by the clean accuracy during training (Fig. \ref{fig:clean_accuracy}) and clean recall rate (Fig. \ref{fig:recall_rate}), in which our CSOT consistently outperforms other methods in accurately retrieving clean labels, leading to significant performance improvements.
These experiments fully show that our CSOT can maintain both high quantity and high quality of clean labels during training.

\paragraph{Effectiveness of corrupted labels correction via CSOT.}
As shown in Tab. \ref{tab:ablation} (h), only training with identified clean labels leads to inferior model performance.
Furthermore, replacing our CSOT-based denoising and relabeling with confidence thresholding (CT) \cite{fixmatch} for corrupted label correction also degrades the model performance, as shown in Tab. \ref{tab:ablation} (i).
The CT methods assign pseudo labels to samples based on model prediction, which is unreliable in the early training stage, especially under high noise rates.
Our CSOT-based denoising and relabeling fully consider the inter- and intra-distribution structure of samples, yielding more robust labels.
Particularly, our CSOT outperforms NCE and DivideMix significantly in label correction, as demonstrated by the superior corrected accuracy in Fig. \ref{fig:corrected_accuracy} and the improved clarity of the confusion matrix in Fig. \ref{fig:confusion_matrix}.

\begin{table}
\centering
\caption{\textbf{Ablation studies under multiple label noise ratios on CIFAR-10 and CIFAR-100.}
% to evaluate the performance of our CSOT method.
"repl." is an abbreviation for "replaced", and $\mathcal{L}^{ce}$ represents a cross-entropy loss.
GMM refers to the selection of clean labels based on small-loss criterion \cite{DivideMix}. CT (confidence thresholding \cite{fixmatch}) is a relabeling scheme where we set the CT value to 0.95.}
\label{tab:ablation}
\resizebox{\linewidth}{!}{ %< auto-adjusts font size to fill line
\begin{small}
\begin{tabular}{l|l|cccc|ccc|c} 
\toprule
                                                                                          & Dataset                                                    & \multicolumn{4}{c|}{CIFAR-10}                                      & \multicolumn{3}{c|}{CIFAR-100}                    & \multirow{3}{*}{Avg}                       \\
                                                                                          & Noise type                                                 & \multicolumn{3}{c}{Sym.}                         & Asym.          & \multicolumn{3}{c|}{Sym.}                        &                                            \\
                                                                                          & Method/Noise ratio                                         & 0.5            & 0.8            & 0.9            & 0.4            & 0.5            & 0.8            & 0.9            &                                            \\ 
\midrule
\multirow{5}{*}{\begin{tabular}[c]{@{}l@{}}Denoise\\ Relabeling\\ Technique\end{tabular}} & (a) Classical OT       & 95.45          & 91.95          & 82.35          & 95.04          & 75.96          & 62.46          & 43.28          & 78.07  \\
                                                                                          & (b) Structure-aware OT                                               & 95.86          & 91.87          & 83.29          & 95.06          & 76.20          & 63.73          & 44.57          & 78.65  \\
                                                                                          & (c) CSOT w/o $\Omega^{\mathbfP}$ and $\Omega^{\mathbfL}$             & 95.53          & 93.84          & 89.50          & 95.14          & 75.96          & 66.50          & 47.55          & 80.57  \\
                                                                                          & (d) CSOT w/o $\Omega^{\mathbfP}$                               & 95.77          & 94.08          & 89.97          & 95.35          & 76.09          & 66.79          & 48.13          & 80.88   \\
                                                                                          & (e) CSOT w/o $\Omega^{\mathbfL}$                               & 95.55          & 93.97          & 90.41          & 95.15          & 76.17          & 67.28          & 48.01          & 80.93  \\ 
\midrule
\multirow{5}{*}{\begin{tabular}[c]{@{}l@{}}Learning\\ Technique\end{tabular}}             & (f) GMM + $\mathcal{L}^{sup}$                                  & 92.48          & 80.37          & 31.76          & 90.80          & 69.52          & 48.49          & 20.86          & 62.04   \\
                                                                                          & (g) CSOT repl. $\mathcal{L}^{sup}$ with $\mathcal{L}^{ce}$     & 93.47          & 81.93          & 53.45          & 91.43          & 72.66          & 50.62          & 21.77          & 66.48  \\
                                                                                          & (h) CSOT w/o $\mathcal{L}^{semi}$                              & 95.34          & 93.04          & 88.9           & 94.11          & 75.16          & 61.13          & 36.94          & 77.80  \\
                                                                                          & (i) CSOT repl. correction with
  CT (0.95)                      & 95.46          & 90.73          & 89.09          & 95.21          & 75.85          & 64.28          & 48.76          & 79.91     \\
                                                                                          & (j) CSOT w/o $\mathcal{L}^{simsiam}_{\mathcal{D}_{corrupted}}$ & 95.92          & 94.17          & 89.31          & 95.16          & 76.38          & 66.17          & 45.56          & 80.38  \\ 
\midrule
                                                                                          & CSOT~                                                      & \textbf{96.20} & \textbf{94.39} & \textbf{90.65} & \textbf{95.50} & \textbf{77.94} & \textbf{67.78} & \textbf{50.50} & \textbf{81.85}  \\
\bottomrule
\end{tabular}
\end{small}
} %< \resizebox
\vspace{-1em}
\end{table}

\paragraph{Effectiveness of curriculum training scheme.}
According to the progressive clean and corrupted accuracy during the training process shown in Fig. \ref{fig:clean_accuracy} and Fig. \ref{fig:corrected_accuracy}, our curriculum identification scheme ensures high accuracy in the early training stage, avoiding overfitting to wrong corrected labels.
Note that since our model is trained using only a fraction of clean samples, it is crucial to employ a powerful supervised learning loss to facilitate better learning.
Otherwise, the performance will be poor without the utilization of a powerful supervised training loss, as evidenced in Tab. \ref{tab:ablation} (g).
In addition, the incorporation of self-supervised loss enhances noise-robust representation, particularly in high noise rate scenarios, as demonstrated in our experiments in Tab. \ref{tab:ablation} (j).

\paragraph{Time cost discussion for solving CSOT}
To verify the efficiency of our proposed lightspeed scaling iteration, we conduct some experiments for solving CSOT optimization problem of different input sizes on a single GPU NVIDIA A100.
As demonstrated in Tab. \ref{tab:CSOT_timecost}, our proposed lightspeed computational method that involves an efficient scaling iteration (Algorithm \ref{algo:fast_dykstra}) achieves lower time cost compared to vanilla Dykstra’s algorithm (Algorithm \ref{algo:dykstra}).
Specifically, compared to the vanilla Dykstra-based approach, our efficient scaling iteration version can achieve a speedup of up to 3.7 times, thanks to efficient matrix-vector multiplication instead of matrix-matrix multiplication.
Moreover, even for very large input sizes, the computational time cost does not increase significantly.

\section{Conclusion and Limitation}
In this paper, we proposed Curriculum and Structure-aware Optimal Transport (CSOT), a novel solution to construct robust denoising and relabeling allocator that simultaneously considers the inter- and intra-distribution structure of samples.
Unlike current approaches, which rely solely on the model's predictions, CSOT considers the global and local structure of the sample distribution to construct a robust denoising and relabeling allocator. During the training process, the allocator assigns reliable labels to a fraction of the samples with high confidence, ensuring both global discriminability and local coherence.
To efficiently solve CSOT, we developed a lightspeed computational method that involves a scaling iteration within a generalized conditional gradient framework.
Extensive experiments on three benchmark datasets validate the efficacy of our proposed method. 
While class-imbalance cases are not considered in this paper within the context of LNL, we believe that our approach can be further extended for this purpose.

\begin{figure*}
    \vspace{2em}
    % \begin{minipage}{0.23\linewidth}
    %     \vspace{-1em}
    %     \tabcaption{\textbf{Comparison of total training time (hours) on CIFAR-10.}
    %     }\label{tab:time}
    %     \vspace{0.5em}
    %     \centering
    %     \begin{small}
    %     \resizebox{\linewidth}{!}{ %< auto-adjusts font size to fill line
    %     \label{tab:timecost}
    %     \begin{tabular}{lc}
    %     \toprule
    %     Method                                      & Time cost      \\ 
    %     \hline
    %     DivideMix\cite{DivideMix} & 5.1h             \\
    %     NCE\cite{NCE}             & 6.5h           \\
    %     \textbf{CSOT}                               & \textbf{4.8h} 
    %     \\ \bottomrule
    %     \end{tabular}}
    %     \end{small}
    % \end{minipage}
    % \hfill
    \begin{minipage}{1\linewidth}
      \begin{subfigure}{0.32\linewidth}
        \centering
        \includegraphics[width=\linewidth]{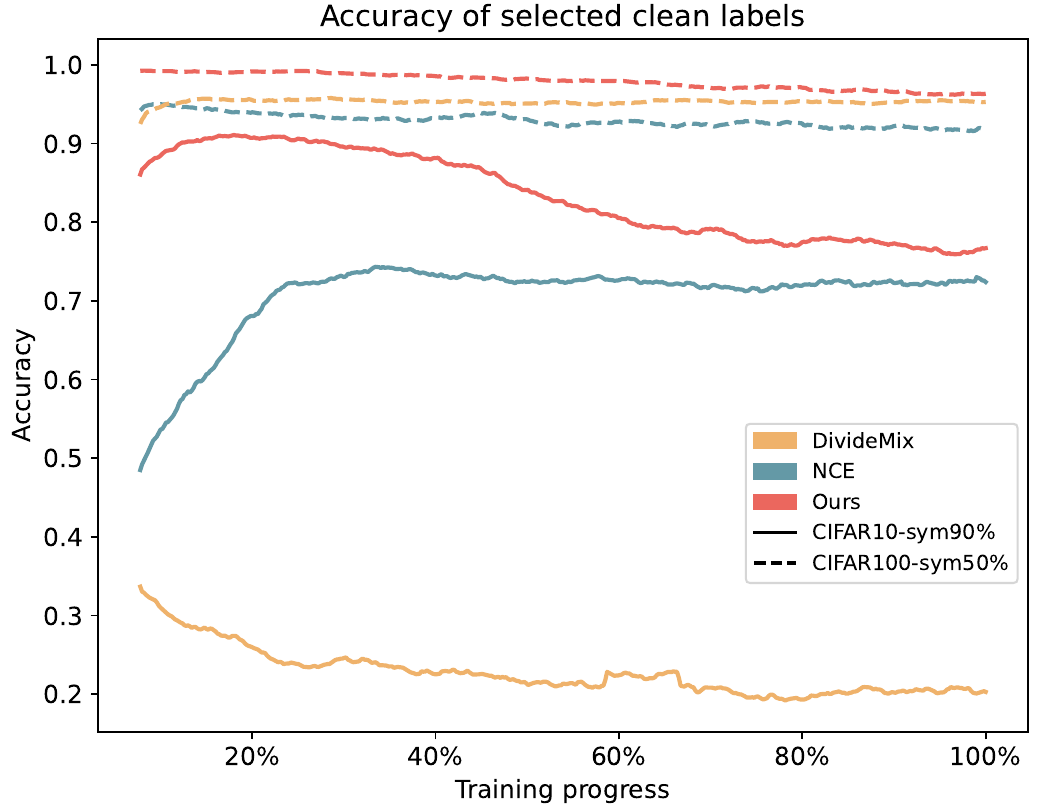}
        \caption{Clean accuracy}
        \label{fig:clean_accuracy}
      \end{subfigure}
      \begin{subfigure}{0.32\linewidth}
        \centering
        \includegraphics[width=\linewidth]{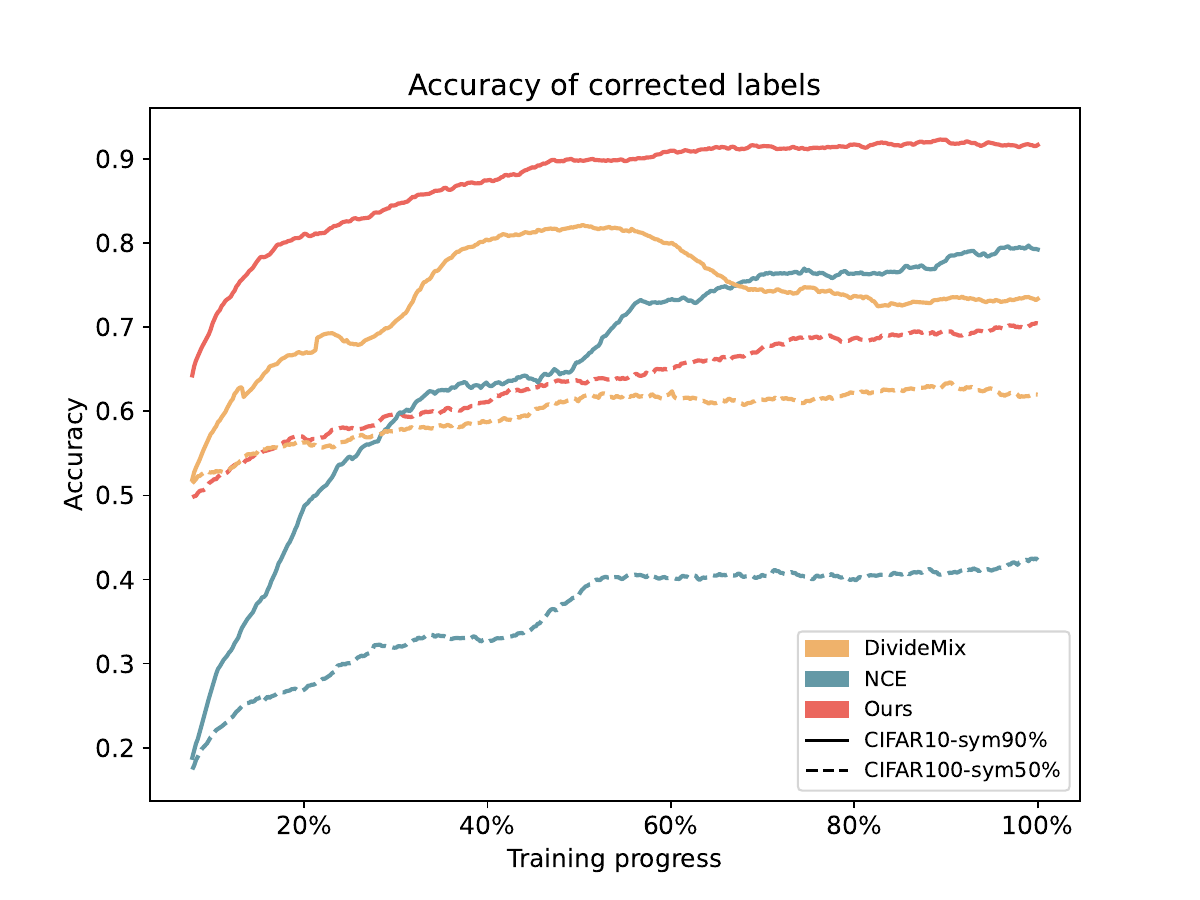}
        \caption{Corrected accuracy}
        \label{fig:corrected_accuracy}
      \end{subfigure}
      \begin{subfigure}{0.32\linewidth}
        \centering
        \includegraphics[width=\linewidth]{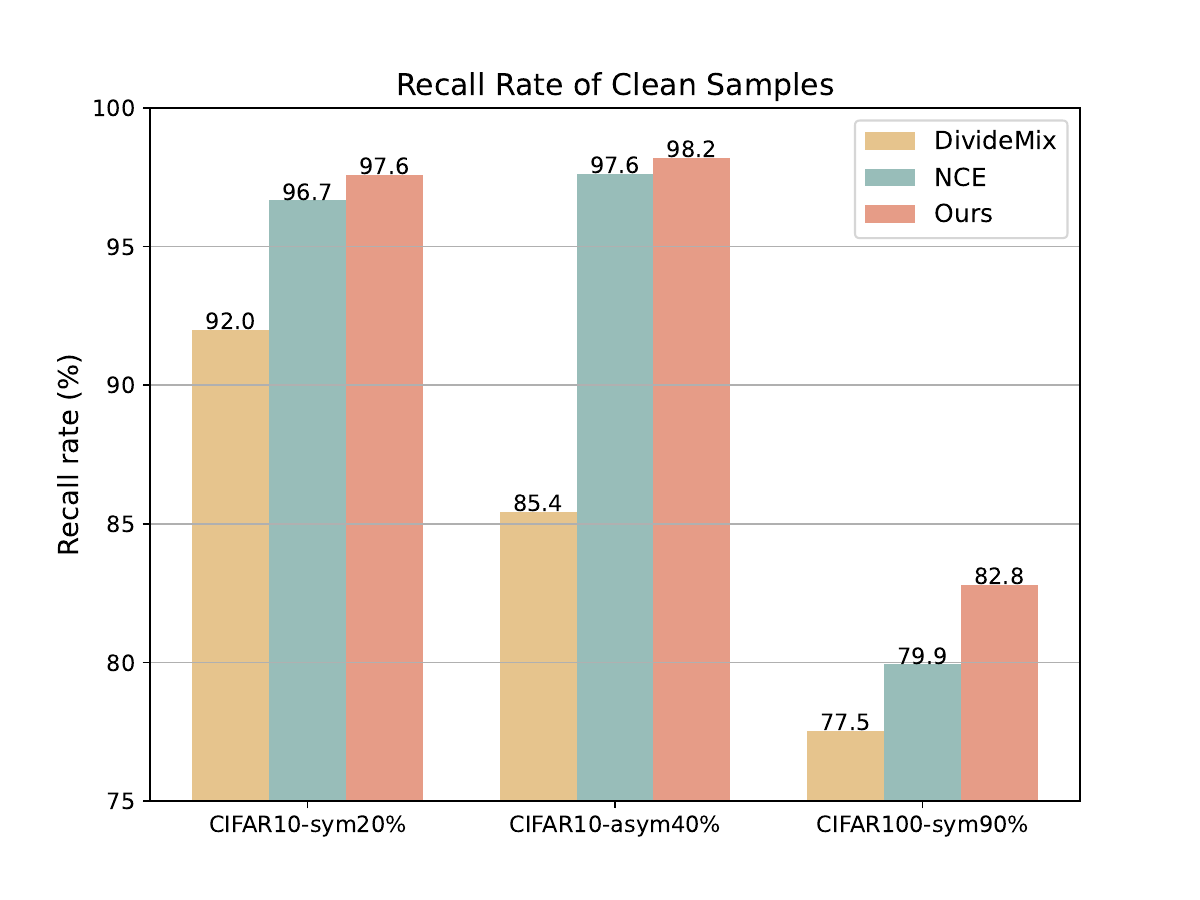}
        \caption{Clean recall rate}
        \label{fig:recall_rate}
      \end{subfigure}
      \figcaption{\textbf{Performance comparison for clean label identification and corrupted label correction.}}
      \label{fig:subfigures}
    \end{minipage}
    \vspace{-1em}
\end{figure*}

\section{Acknowledgement}
This work was supported by NSFC (No.62303319), Shanghai Sailing Program (21YF1429400, 22YF1428800), Shanghai Local College Capacity Building Program (23010503100), Shanghai Frontiers Science Center of Human-centered Artificial Intelligence (ShangHAI), MoE Key Laboratory of Intelligent Perception and Human-Machine Collaboration (ShanghaiTech University), and Shanghai Engineering Research Center of Intelligent Vision and Imaging.

\newpage

% References
{\small
\bibliographystyle{plain}
\bibliography{egbib}
}

\newpage

\appendix

\renewcommand{\thefigure}{S\arabic{figure}}
\renewcommand{\thetable}{S\arabic{table}}
\renewcommand{\theequation}{S\arabic{equation}}
\renewcommand{\thealgorithm}{S\arabic{algorithm}}
\renewcommand{\thelemma}{S\arabic{lemma}}

\section{Supplement for Training Details}
\subsection{Implementation Details}
\paragraph{CIFAR10/100.}
Following previous works \cite{DivideMix, NCE}, we use PreAct ResNet-18 \cite{Resnet-18} as the backbone, and train it using SGD with a momentum of 0.9, a weight decay of 0.0005, and a batch size of 128.
We set the initial learning rate as 0.02, with a cosine learning rate decay schedule.
The hidden layer in SimSiam projection MLP is set to 128-d.

\paragraph{Webvision.}
Following previous works \cite{DivideMix, NCE}, we use inception-resnet v2 \cite{szegedy2017inception} as the backbone, and train it using SGD with a momentum of 0.9, a weight decay of 0.0005, and a batch size of 32.
We set the initial learning rate as 0.01, with a cosine learning rate decay schedule.
The hidden layer in SimSiam projection MLP is set to 384-d.

\paragraph{Other details.}
All experiments are implemented on a single GPU of NVIDIA A100 with 80 GB memory.
We follow DivideMix \cite{DivideMix} and NCE \cite{NCE} to set the hyper-parameters in the mixup loss and label consistency loss.
The loss trade-off weights $\lambda_1$ and $\lambda_2$ are empirically set to $1$, which is similar to NCE \cite{NCE}.
The selection criterion of the hyper-parameters $\varepsilon$ and $\kappa$ in CSOT formulation is analyzed in Sec. \ref{sec:Hyperparameter}.
Our code is modified based on DivideMix \cite{DivideMix} \url{https://github.com/LiJunnan1992/DivideMix} and NCE \cite{NCE} \url{https://github.com/lijichang/LNL-NCE}.
The CSOT solver code is modified based on POT \cite{flamary2021pot}.

\subsection{Training Loss}
To be self-contained, we specify the Mixup loss $\mathcal{L}^{mix}$ and label consistency loss $\mathcal{L}^{lab}$ adopted in NCE \cite{NCE}, and the self-supervised loss $\mathcal{L}^{simsiam}$ proposed in SimSiam \cite{simsiam}.
\paragraph{Mixup loss.}
Mixup \cite{zhang2017mixup} can effectively mitigate noise memorization, and thus mixup regularization can be used to construct augmented samples through linear combinations of existing samples from $\mathcal{D}_{clean}$.
Given two existing samples $(\mathbf{x}_i, y_i)$ and $(\mathbf{x}_j, y_j)$ from $\mathcal{D}_{clean}$, an augmented sample $\widetilde{\mathbf{x}},\widetilde{y}$ can be generated as follows:
\begin{equation}
    \widetilde{\mathbf{x}}=\gamma\mathbf{x}_i+(1-\gamma)\mathbf{x}_j,\quad
    \widetilde{y}=\gamma p_y(y_i) + (1-\gamma)p_y(y_j),
\end{equation}
where $p_y(y_i)$ is the one-hot vector for the given label $y_i$ and $\gamma\sim Beta(\alpha)$ is a mixup ratio and $\alpha$ is a scalar parameter of Beta distribution. The cross-entropy loss applied to $B'$ augmented samples in each training mini-batch is defined as follows:
\begin{equation}
    \mathcal{L}^{mix}=-\frac{1}{B'}\sum_{i=1}^{B'}
    \widetilde{y}_i \log p(y|\widetilde{\mathbf{x}}_i),
\end{equation}
where $p(y|\widetilde{\mathbf{x}}_b)$ is the softmax prediction of a mixup input $\widetilde{\mathbf{x}}_b$.

\paragraph{Label consistency loss.}
Label consistency regularization encourages the fine-tuned model to produce the
same output when there are minor perturbations in the input \cite{fixmatch}.
Hence consistency regularization can be used to further enhance the robustness of the model \cite{englesson2021consistency}.
The label consistency is enforced by minimizing the following loss:
\begin{equation}
   \mathcal{L}^{lab}=-\frac{1}{B'}\sum_{i=1}^{B'} p_y(y_{i})\log p(y|\textbf{Aug}(x_{i})),
\end{equation}
where $\textbf{Aug}(\cdot)$ denotes the function that perturbs the chosen samples using Autoaugment technique proposed in \cite{cubuk2019autoaugment}.

\paragraph{SimSiam loss.}
We simply define a feature extractor as $f$ and a projection layer as $h$.
Given two augmented views $\mathbf{x}_i^1$ and $\mathbf{x}_i^2$ from an image $\mathbf{x}$, we can have $p_i^1=h(f(\mathbf{x}_i^1))$ and $z_i^2=f(\mathbf{x}_i^2)$.
The negative cos similarity is defined as follows:
\begin{equation}
\ell(p_i^1, z_i^2)=-\frac{p_i^1 z_i^2}{\left\|p_i^1\right\|_2 \left\|z_i^2\right\|_2}
\end{equation}
where $\left\|\cdot\right\|_2$ is $\ell_2$-norm. 
To construct the contrastive loss by enforcing the consistency between two positive pairs $(p_i^1, z_i^2)$ and $(p_i^2,z_i^1)$, the SimSiam loss is defined as follows:
\begin{equation}
    \mathcal{L}^{simsiam}=
    -\frac{1}{2B'}\sum_{i=1}^{B'}
    \big(
    \ell(p_i^1, \texttt{stopgrad}(z_i^2))
    + \ell(p_i^2, \texttt{stopgrad}(z_i^1)) \big)
\end{equation}
where $\texttt{stopgrad}(\cdot)$ is a stop-gradient operation that can be easily realized by \texttt{.detach()} operation in PyTorch.

\subsection{Training Process}
% 算法流程图
\begin{algorithm}[htb]
   \caption{Training process of proposed CSOT}
   \label{algo:training}
   \begin{algorithmic}[1]
    \STATE {\bfseries Input:} Training dataset $\mathcal{D}_{train}$, number of warmup training epochs $T_{warm}$ , number of supervised training epochs $T_{sup}$, number of semi-supervised training epochs $T_{semi}$, initial curriculum budget $m_0$. \\
    \STATE Initialize model parameter $\theta$. \\
    \FOR{$t=1, \ldots, (T_{sup}+T_{semi})$}
        \IF{$t<T_{warm}$}
            \STATE WarmUp($\mathcal{D}_{train}$; $\theta$). \\
        \ELSE
            \STATE Compute the curriculum budget $m=\min(1.0, m_0+\frac{t-1}{T_{sup}-1})$.
            \FOR{$b=1,\ldots,N_{batch}^{relabeling}$} 
                \STATE Draw a mini-batch $\mathcal{X}_b$ from $\mathcal{D}_{train}$.
                \STATE Denoising and relabeling for $\mathcal{X}_b$: solve the Problem (\ref{eq:CSOT}) by Algorithm \ref{algo:gcg_spOT}. \\
            \ENDFOR
            \STATE Use Eq. \ref{eq:dataset_split} to split the training dataset $\mathcal{D}_{train}$ into the clean dataset $\mathcal{D}_{clean}$ and the corrupted dataset $\mathcal{D}_{currupted}$. \\
            \FOR{$b'=1,\ldots,N_{batch}^{train}$} 
                \STATE Draw a mini-batch $\mathcal{X}_{b'}$ from $\mathcal{D}_{clean}$, and draw a mini-batch $\mathcal{U}_{b'}$ from $\mathcal{D}_{currupted}$.
                \IF{$t<T_{sup}$}
                    \STATE $\mathcal{L}=\mathcal{L}^{mix}_{\mathcal{X}_{b'}} + \mathcal{L}^{lab}_{\mathcal{X}_{b'}} + \lambda_1\mathcal{L}^{simsiam}_{\mathcal{U}_{b'}}$.
                \ELSE
                    \STATE $\mathcal{L}=\mathcal{L}^{mix}_{\mathcal{X}_{b'}} + \mathcal{L}^{lab}_{\mathcal{X}_{b'}} + \lambda_2\mathcal{L}^{lab}_{\mathcal{U}_{b'}}$.
                \ENDIF
                \STATE Update model parameter $\theta$ by applying SGD with loss $\mathcal{L}$.
            \ENDFOR
        \ENDIF
    \ENDFOR
    \STATE {\bfseries Return:} Optimal model parameter $\theta$.
\end{algorithmic}
\end{algorithm}
We specify our training process in Algorithm \ref{algo:training}, which mainly includes two parts, \ie denoising and relabeling part, the training part.

\section{Supplement for Experimental Results}

\subsection{Comparison with Prediction- , OT- and SOT-Based Pseudo-Labeling}
\begin{figure*}
\centering
\begin{subfigure}[t]{1\linewidth}
	\centering
	\includegraphics[width=1.\columnwidth]{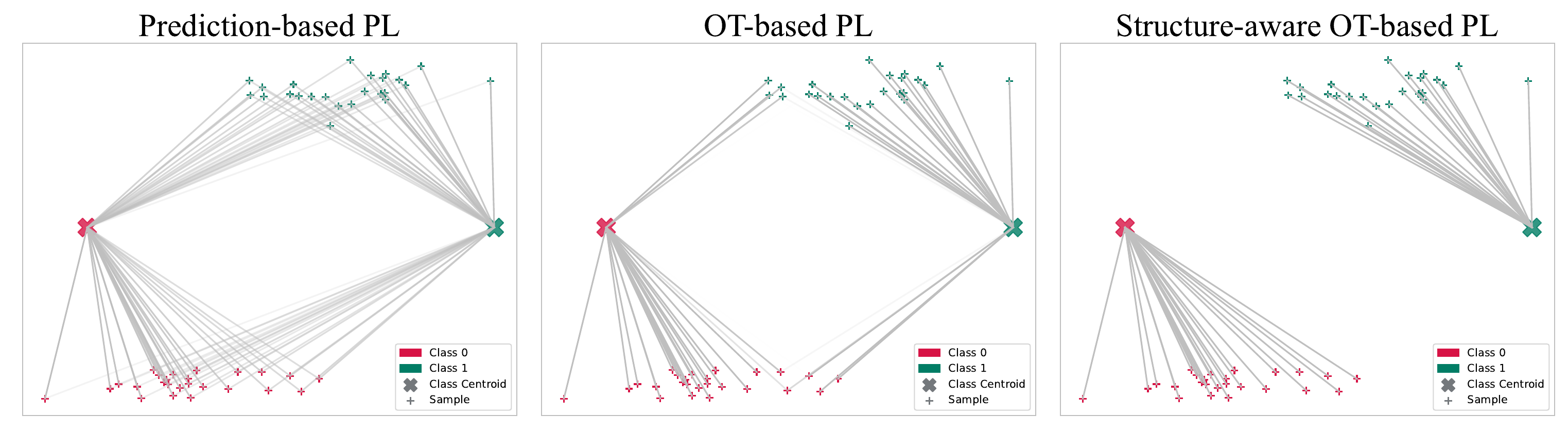}
	\caption{Illustrations of different pseudo-labeling mappings.}
	\label{fig:SOT_mapping}
\end{subfigure}

\begin{subfigure}[t]{1\linewidth}
	\centering
	\includegraphics[width=1.\columnwidth]{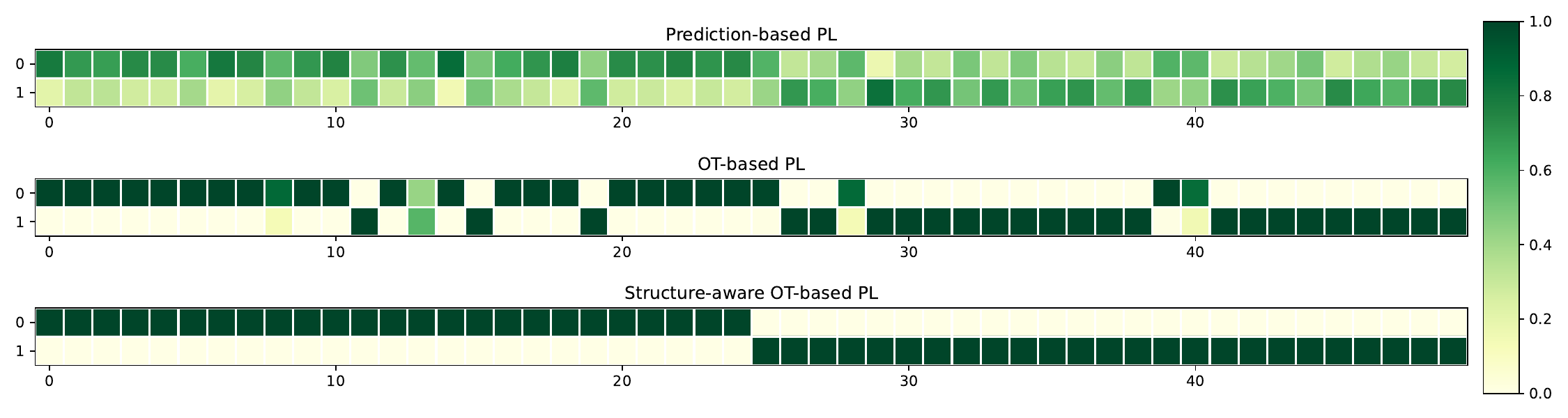}
	\caption{Illustrations of different pseudo-labeling (transposed) coupling matrices.}
	\label{fig:SOT_matrix}
\end{subfigure}

\caption{\textbf{Comparison with prediction- , OT- and SOT-based pseudo-labeling.}
We consider a toy binary classification case for simplicity.
}
\label{fig:SOT_vis}
\end{figure*}
As shown in Fig. \ref{fig:SOT_mapping} and \ref{fig:SOT_matrix}, prediction-based PL generates vague predictions when the class centroids are not discriminative enough.
To explain this, prediction-based PL assigns labels in a per-class manner without considering either the global structure of the sample distribution.
To this end, OT-based PL optimizes the mapping problem considering the inter-distribution matching of samples and classes, and thus produces more discriminative labels.
However, as shown in Fig. \ref{fig:SOT_mapping}, two nearby samples could be mapped to two far-away class centroids, which is not reasonable since it overlooks the inherent coherence structure of the sample distribution, i.e. intra-distribution coherence.
Therefore, \textbf{our proposed SOT encourages generating more robust labels with both global discriminability and local coherence}.

\subsection{Visualization of the Coupling Matrix for CSOT}
\begin{figure*}
\centering
\begin{subfigure}[t]{1\linewidth}
	\centering
	\includegraphics[width=1.\columnwidth]{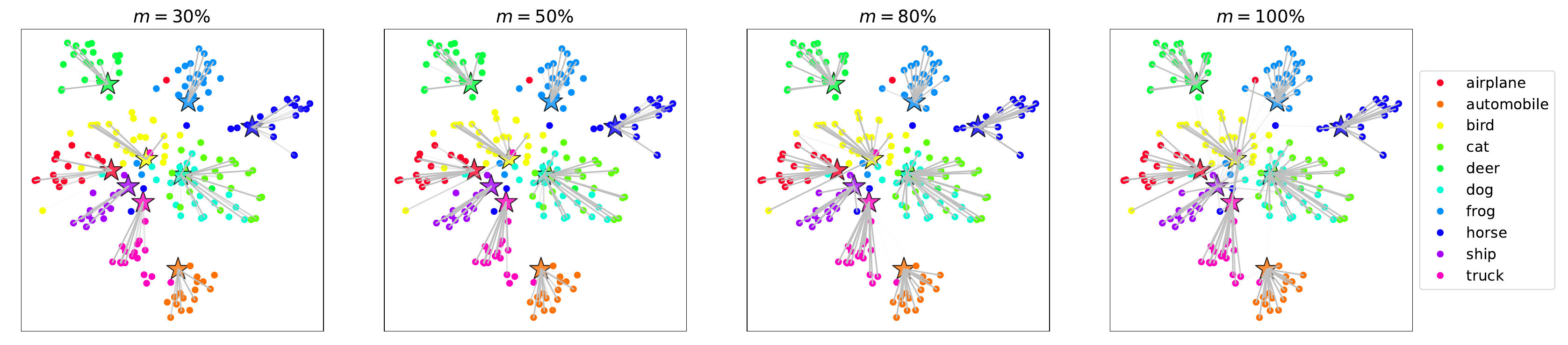}
	\caption{Illustrations of CSOT mappings with different curriculum budgets $m$.}
	\label{fig:CSOT_mapping}
\end{subfigure}

\begin{subfigure}[t]{1\linewidth}
	\centering
	\includegraphics[width=1.\columnwidth]{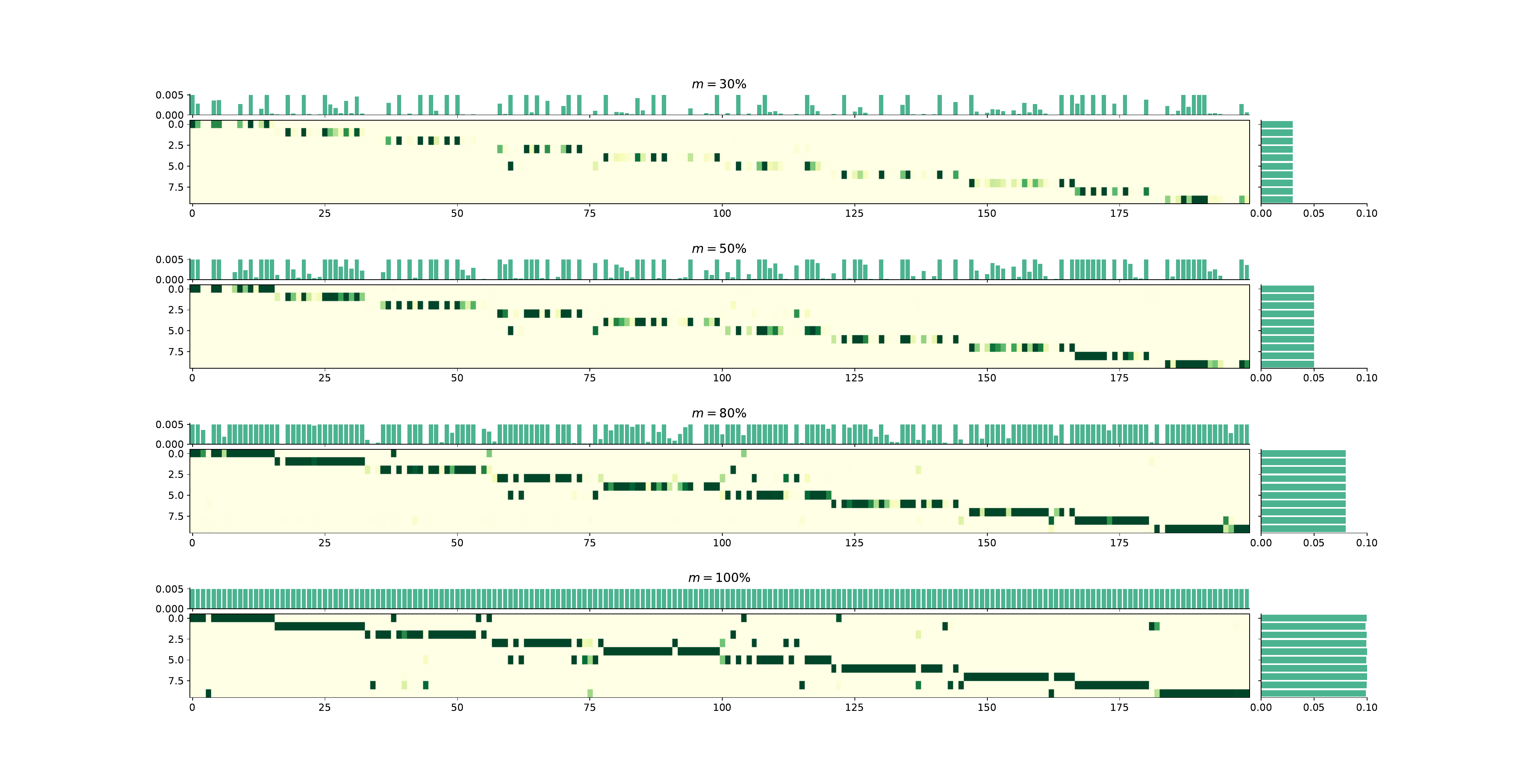}
	\caption{Illustrations of CSOT (transposed) coupling matrices with different curriculum budgets $m$.}
	\label{fig:CSOT_matrix}
\end{subfigure}

\caption{\textbf{Comparison with using different curriculum budgets $m$.}
The samples are plotted as colorful dots and the class centroids are plotted as five-pointed stars, which are colored by their true labels. 
}
\label{fig:CSOT_vis}
\end{figure*}
We visualize randomly selected 200 samples of CIFAR-10 (after 10-epoch warm-up training) and 10 implicit class centroids in feature space in Fig. \ref{fig:CSOT_mapping}.
The feature dots are visualized based on t-SNE \cite{t-SNE}, and the implicit class centroids are obtained by a weighted sum of the softmax prediction scores.
It is evident that the feature space exhibits confusion in the early training stage, particularly among semantically similar classes, such as cat and dog.
Therefore, \textbf{utilizing a full mapping based on OT would lead to incorrect assignments for samples that have not yet been sufficiently learned}. 
Our proposed strategy, on the other hand, demonstrates superiority by selectively assigning reliable labels to a fraction of samples with the highest confidence. 
This approach ensures high training label accuracy and mitigates the negative impact of unreliable labels during the early stages of training.
In addition, we also visualize the coupling matrices, along with their corresponding row and column sum vectors by histograms in Fig. \ref{fig:CSOT_matrix}, which illustrates the partial mapping controlled by curriculum constraints.

\subsection{Convergence of the proposed GCG algorithm for CSOT}
\begin{figure*}
\begin{center}
\includegraphics[width=0.5\columnwidth]{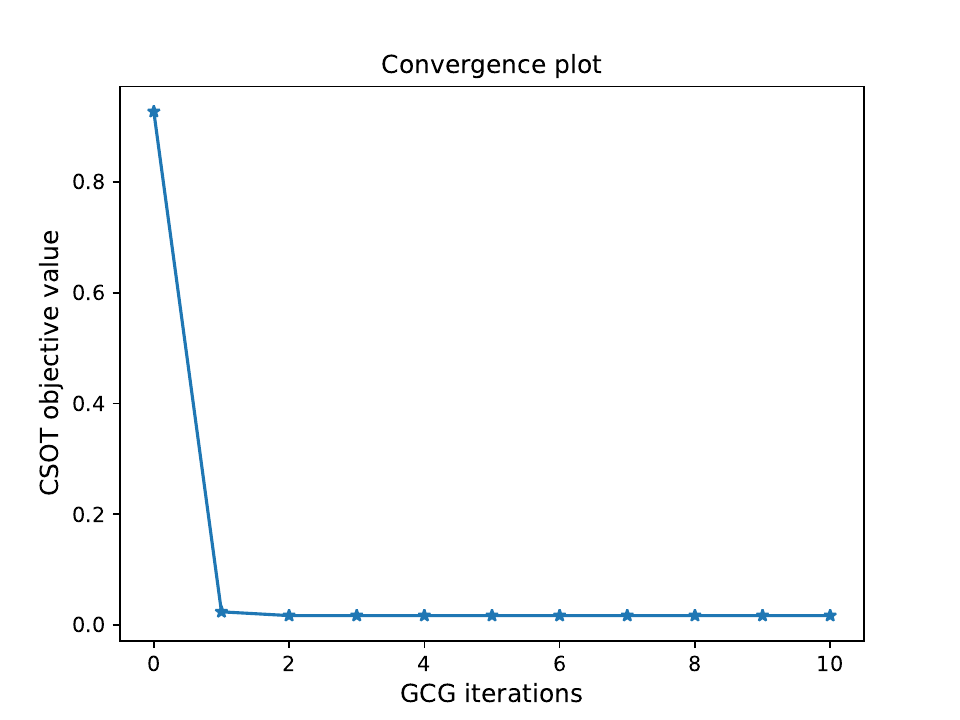}
\end{center}
\caption{\textbf{Convergence behaviour of the generalized conditional gradient (GCG) algorithm for CSOT.} }
\label{fig:convergence}
\end{figure*}
We set the number of outer loops is set to 10, and the number for inner scaling iteration is set to 100. And the curriculum budget $m$ is set to $0.5$, and the local coherent regularized terms weight $\kappa$ is set to $1$.
As demonstrated in Fig. \ref{fig:convergence}, our computational method, which includes a novel scaling iteration within a generalized conditional gradient framework,  \textbf{is capable of optimizing the non-convex objective and converging to a stationary point}.

\subsection{Addictional Results of CSOT}

\paragraph{Effectiveness of CSOT-based denoising and relabeling.}
\begin{figure}[ht]
  \centering
  \begin{subfigure}{0.45\linewidth}
    \centering
    \includegraphics[width=\linewidth]{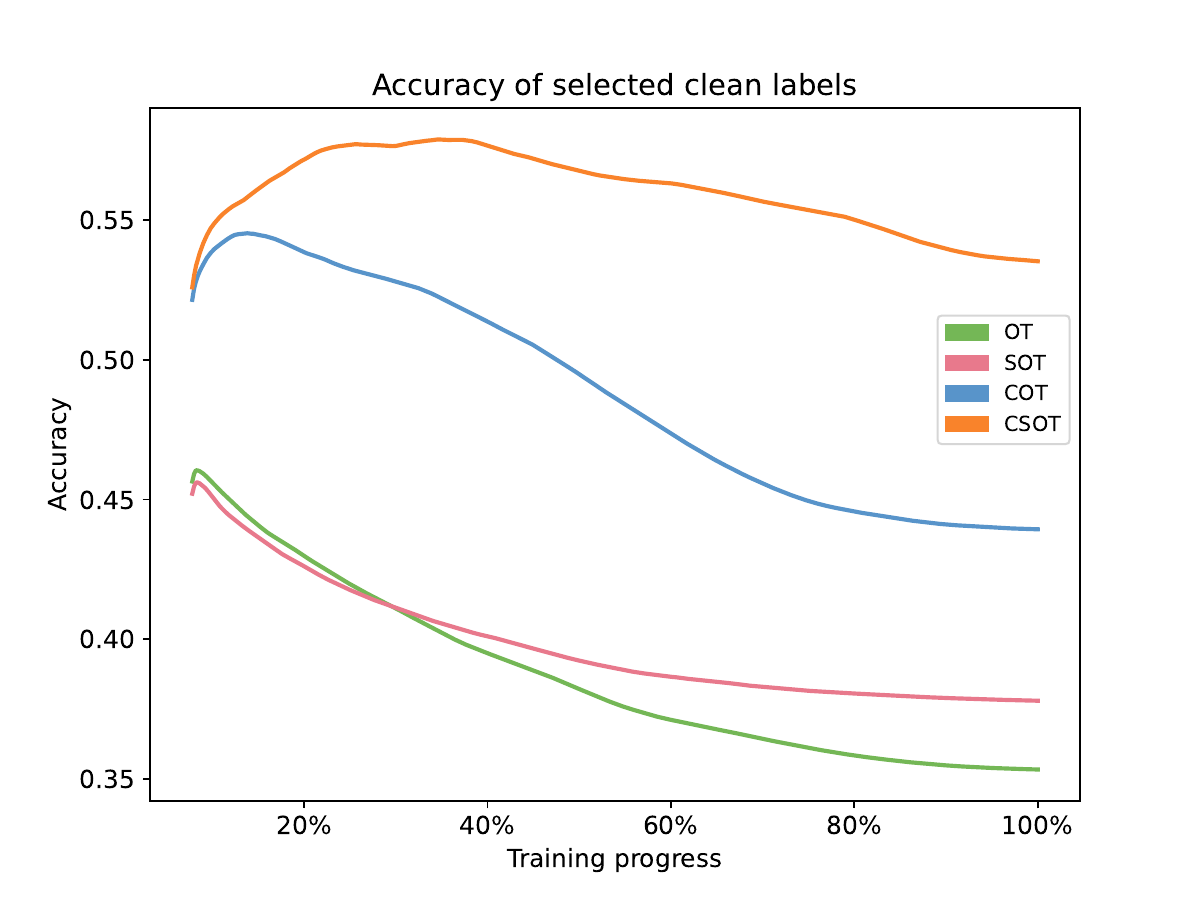}
    \caption{Clean accuracy}
    \label{fig:OT_clean_accuracy}
  \end{subfigure}
  % \hfill
  \begin{subfigure}{0.45\linewidth}
    \centering
    \includegraphics[width=\linewidth]{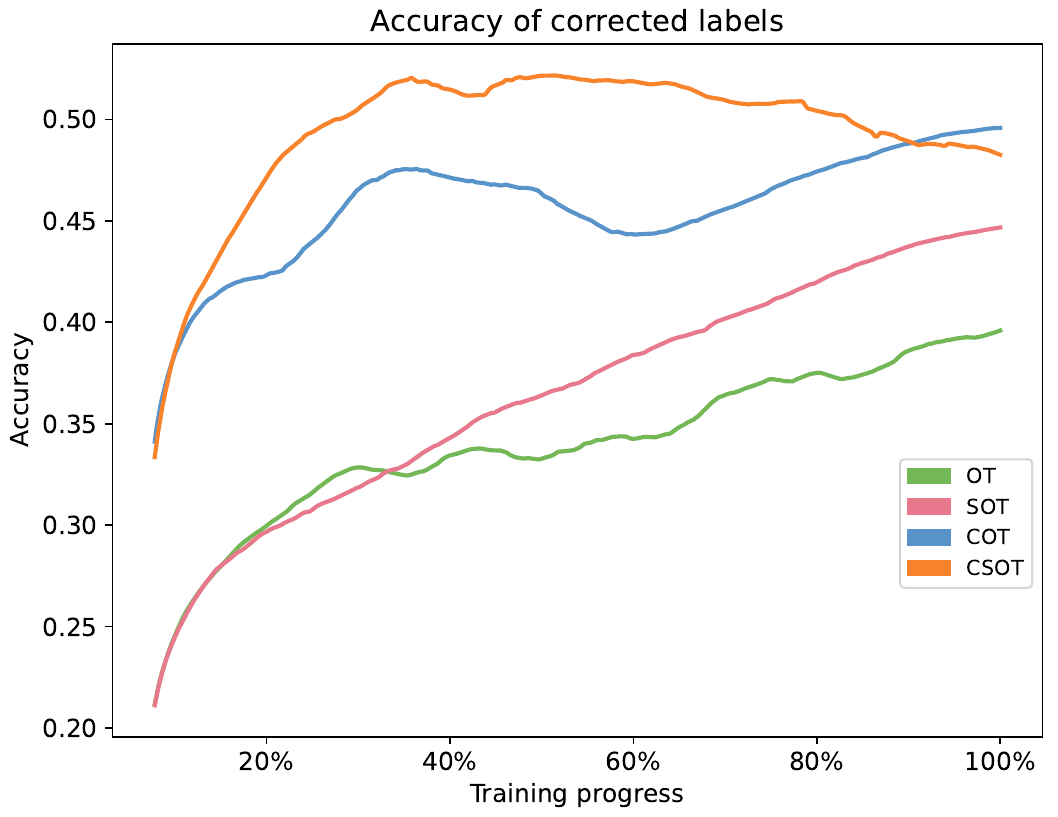}
    \caption{Corrected accuracy}
    \label{fig:OT_corrected_accuracy}
  \end{subfigure}
  \caption{\textbf{Performance comparison for clean label identification and corrupted label correction.}
  The experiments are conducted on CIFAR-100 sym0.9.}
  \label{fig:OT_analysis}
\end{figure}

\begin{figure*}
\centering
\includegraphics[width=0.80\linewidth]{}
\figcaption{\textbf{Comparision of confusion matrix on CIFAR-10 assym-40\%.} 
The darker the color on the diagonal elements of the matrix, the higher the accuracy.}\label{fig:confusion_matrix}
\end{figure*}

\begin{table*}[h!t]
\centering
\resizebox{0.3\linewidth}{!}{ %< auto-adjusts font size to fill line
\begin{tabular}{lc}
    \toprule
    Method                                      & Time cost      \\ 
    \hline
    DivideMix\cite{DivideMix} & 5.1h             \\
    NCE\cite{NCE}             & 6.5h           \\
    \textbf{CSOT}                               & \textbf{4.8h} 
    \\ \bottomrule
    \end{tabular}
} %< \resizebox
\caption{
\textbf{Comparison of total training time (hours) on CIFAR-10.}The experiments are implemented on a single GPU NVIDIA A100.} % \caption
\label{tab:time}
\end{table*}
To further verify the effectiveness of our CSOT for clean label identification and corrupted label correction, we also conduct ablation experiments on OT-, SOT-, COT-, and CSOT-based denoising and relabeling.
As depicted in Fig. \ref{fig:OT_analysis}, \textbf{the incorporation of curriculum constraints ensures high accuracy of clean labels during the early training stage}. This, in turn, facilitates effective learning by providing the model with correct and reliable information, which avoids error accumulation.
Furthermore, \textbf{local coherent regularized terms contribute to improved label correction}.

\paragraph{Result of Clothing1M.}

\begin{table}
\centering
\caption{\textbf{Comparison with state-of-the-art methods in test accuracy (\%) on Clothing1M.} 
% The results are mainly copied from \cite{RRL,NCE}.
}\label{tab:clothing1M}
\resizebox{\linewidth}{!}{ %< auto-adjusts font size to fill line
\begin{small}
\begin{tabular}{l|cccccc|c}
\toprule
Method   & Meta-L. \cite{li2019learning} & DivideMix \cite{DivideMix} & ELR+ \cite{ELR} & ELR+ \cite{ELR} & RRL \cite{RRL} & NCE\textbf{\ddag} \cite{NCE} & \textbf{CSOT}   \\ 
\midrule
Accuracy & 73.50                                          & 74.48                                       & 72.87                            & 74.80                            & 74.84                           & 74.71                           & \textbf{75.16} 
\\ \bottomrule
\end{tabular}
\end{small}
} %< \resizebox
\end{table}
Clothing1M \cite{clothing1M} is another real-world noisy dataset, which consists of 1 million training images collected from online shopping websites with labels generated from surrounding texts.
We use the augmentation provided in \cite{fixmatch} as $\textbf{Aug}(\cdot)$.
Following the similar setting in NCE \cite{NCE} and DivideMix \cite{DivideMix}, we also \textbf{conduct the experiment on Clothing1M and achieve superior performance} compared to existing approaches, as shown in Tab. \ref{tab:clothing1M}. 
Since NCE utilized an inaccessible data augmentation and hence we reproduce NCE with the augmentation in \cite{fixmatch} for a fair comparison, denoted by \textbf{\ddag}.

\subsection{Hyperparameter Analysis}\label{sec:Hyperparameter}

\paragraph{Entropic regularized weight $\varepsilon$.}
\begin{figure*}
\begin{center}
\includegraphics[width=1.0\columnwidth]{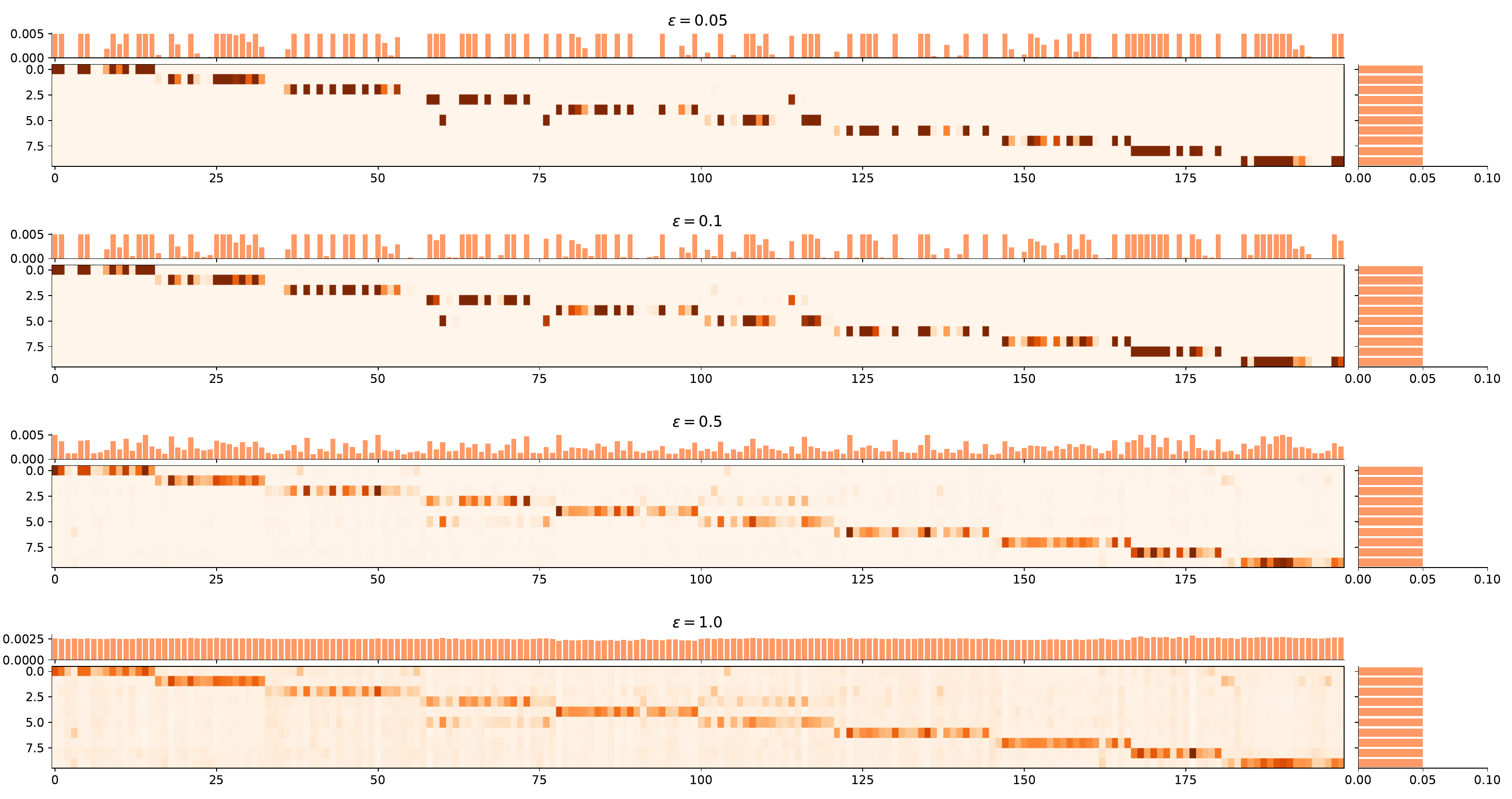}
\end{center}
\caption{\textbf{Visualization of coupling matrix with different entropic regularized weights $\varepsilon$.}
We conduct experiments on randomly selected 200 samples of CIFAR-10 (after 10-epoch warm-up training) and the curriculum budget $m$ is set to $0.5$.}
\label{fig:vareps}
\end{figure*}
When $\varepsilon\to0$, the entropic regularized CSOT formulation becomes closer to the exact CSOT. 
Therefore, in order to obtain a solution that closely approximates the exact CSOT, we prefer to set $\varepsilon$ to a small value.
We visualize the coupling matrix with different $\varepsilon$ in Fig. \ref{fig:vareps}, and it can be observed that the $\varepsilon$ also influences the mapping smoothness.
\textbf{A smaller $\varepsilon$ leads to sharper pseudo labels}. 
To ensure discriminative relabeling and reliable selection, we set $\varepsilon$ to 0.1.

\paragraph{Local coherent regularized weight $\kappa$.}
\begin{figure*}
\begin{center}
\includegraphics[width=0.5\columnwidth]{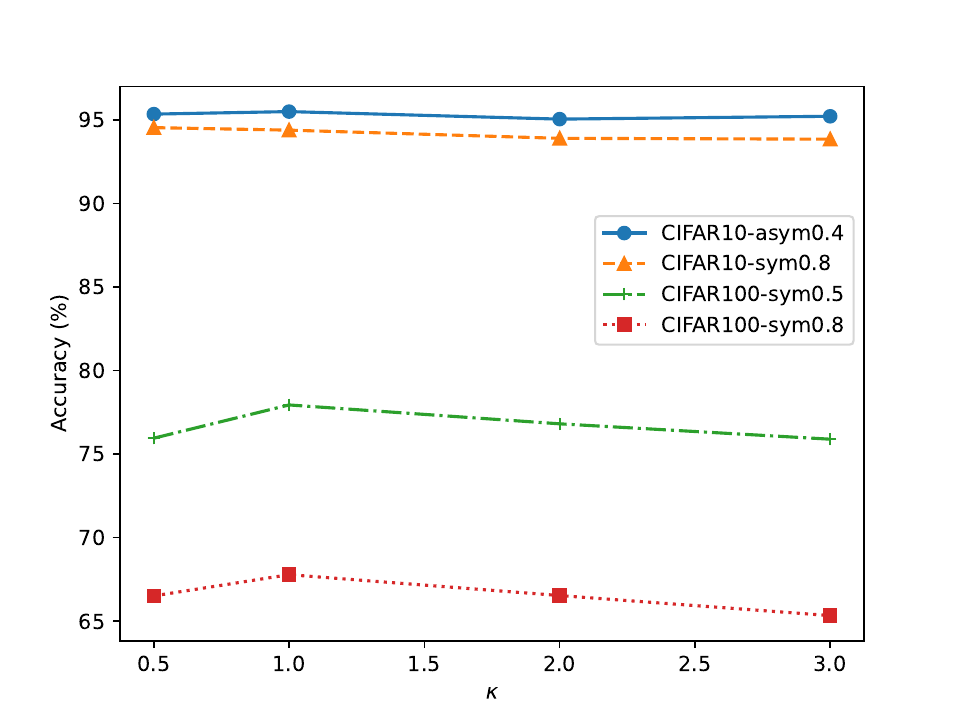}
\end{center}
\caption{\textbf{Sensitivity to the local coherent regularized weight $\kappa$ on four different noisy learning tasks.}}
\label{fig:kappa}
\end{figure*}
The local coherent regularized weight, $\kappa$, determines the strength of local coherent mapping. 
As shown in Fig. \ref{fig:kappa}, we can observe that the performance is not sensitive to the different values of $\kappa$, and it is relatively easy to tune.
It is important to note that \textbf{setting $\kappa$ too high can result in performance degradation, particularly in scenarios with high noise rates}.
This is because the label-level local consistency term $\Omega^{\mathbfL}$, may introduce incorrect consistency in such cases.

\section{More Discussion about CSOT}

\section{Background on Computational Optimal Transport}

\subsection{Sinkhorn's Algorithm}
\begin{algorithm}[htb]
   \caption{Sinhorn algorithm, for entropic regularized classical OT}
   \label{algo:sinkhorn}
   \begin{algorithmic}[1]
    \STATE {\bfseries Input:} Cost matrix $\mathbfC$, marginal constraints vectors $\bmAlpha$ and $\bmBeta$, entropic regularization weight $\varepsilon$\\
    \STATE Initialize: $\mathbfK \gets e^{-\mathbfC/\varepsilon}$, $\bmv^{(0)} \gets \ones_{|\bmBeta|}$\\
    \STATE Compute: $\mathbfK_{\bmAlpha} \gets \frac{\mathbfK}{\diag(\bmAlpha)\ones_{|\bmAlpha| \times |\bmBeta|}}$, 
    $\mathbfK_{\bmBeta}^\top \gets \frac{\mathbfK^\top}{\diag(\bmBeta)\ones_{|\bmBeta| \times |\bmAlpha|}}$
    \texttt{// Saving computation}\\
	\FOR{$n=1,2,3, \ldots$}
	\STATE $\bmu^{(n)} \gets
            \frac{\ones_{|\bmAlpha|}}{\mathbfK_{\bmAlpha} \bmv^{(n-1)}}
            $ \label{line:sinkhorn_u}\\
        \STATE $\bmv^{(n)} \gets
            \frac{\ones_{|\bmBeta|}}
            {\mathbfK_{\bmBeta}^\top \bmu^{(n)}}$
	\ENDFOR
    \STATE {\bfseries Return:} $\diag(\bmu^{(n)}) \mathbfK \diag(\bmv^{(n)})$
\end{algorithmic}
\end{algorithm}

The Sinkhorn algorithm for solving entropic regularized OT problem is presented in Algorithm \ref{algo:sinkhorn}.
It is evident that our proposed scaling iteration is very similar to the existing efficient Sinkhorn algorithm.
The main difference lies in Line \ref{line:fast_u} of Algorithm \ref{algo:sinkhorn}, which corresponds to Line \ref{line:sinkhorn_u} of Algorithm \ref{algo:fast_dykstra}.
Therefore, our scaling iteration shares the same quadratic time complex as the Sinkhorn algorithm.

\subsection{Relation Between Kullback-Leibler Divergence and Entropic Regularized OT}
Given a convex set $\mathcal{C}$, and a matrix $\mathbfM$, the projection according to the Kullback-Leibler (KL) divergence is defined as
\begin{equation}
    \KLprojectionC(\mathbfM)\defeq \argmin_{\mathbfQ\in\setC} \text{KL}(\mathbfQ|\mathbfM).
\end{equation}

According to \cite{benamou2015iterative}, the classical OT can be rewritten in a KL projection form as follows:
\begin{equation}
        \min_{\mathbfQ\in\OTpolytope}
            \left<\mathbfC, \mathbfQ\right>=
        \min_{\mathbfQ\in\OTpolytope}
        \varepsilon\text{KL}(\mathbfQ|e^{-\mathbfC/\varepsilon}),
\end{equation}
which can be interpreted as that solving a classical OT problem is equivalent to solving a KL projection from a given matrix $e^{-\mathbfC/\varepsilon}$ to the constraint $\OTpolytope$.
In light of this, it was proposed in \cite{benamou2015iterative} that when $\mathcal{C}$ is an intersection of closed convex and affine sets, the classical OT problem can be solved by iterative Bregman projections \cite{bregman1967relaxation}. 
However, when $\mathcal{C}$ is an intersection of closed convex but not affine sets, Dykstra's algorithm \cite{dykstra1983algorithm} is employed to guarantee convergence \cite{bauschke2000dykstras}, as iterative Bregman projections do not generally converge to the KL projection on the intersection.

\subsection{Dykstra's Algorithm}
Assume that $\mathcal{C}$ is an intersection of closed convex but not affine sets:
\begin{equation}
        \mathcal{C}=\bigcap_{\ell=1}^{L}  \mathcal{C}_\ell,
\end{equation}
and we extend the indexing of the sets by $L$-periodicity so that they satisfy
\begin{equation}
        \forall n \in \mathbb{N},\quad \mathcal{C}_{n+L}=\mathcal{C}_{n}.
\end{equation}

Dykstra's algorithm \cite{dykstra1983algorithm} starts by initializing 
\begin{equation}
    \mathbfQ^{(0)}=\mathbfK \quad \text{and} \quad \mathbfU^{(0)}=\mathbfU^{(-1)}=\cdots=\mathbfU^{(-L+1)}=\ones.
\end{equation}

One then iteratively defines
\begin{equation}
    \mathbfQ^{(n)}=P^{\text{KL}}_{\setC_n}(\mathbfQ^{(n-1)}\odot\mathbfU^{(n-L)}),
    \quad \text{and} \quad
    \mathbfU^{(n)}=\mathbfU^{(n-L)}\odot\frac{\mathbfQ^{(n-1)}}{\mathbfQ^{(n)}}.
\end{equation}

\subsection{Generalized Conditional Gradient Algorithm}
We are interested in the problem of minimizing under constraints a composite function such as
\begin{equation}
    \min_{\mathbfQ\in\mathcal{C}}=f(\mathbfQ)+g(\mathbfQ),
\end{equation}
where both $f(\cdot)$ is a differentiable and possibly non-convex
function; $g(\cdot)$ is a convex, possibly non-differentiable function; $\mathcal{C}$ denotes any convex and compact set.
One might want to benefit from this composite structure during the optimization procedure. 
For instance, if we have an efficient solver for optimizing
\begin{equation}
    \min_{\mathbfQ\in\mathcal{C}}=\left<\nabla f,\mathbfQ\right>+g(\mathbfQ).
\end{equation}
It is of prime interest to use this solver in the optimization scheme instead of linearizing the whole objective function as one would do with a conditional gradient algorithm \cite{bertsekas1997nonlinear, GeneralizedCG}, as shown in Algorithm \ref{algo:GCG}.

\begin{algorithm}[htb]
   \caption{Generalized conditional gradient algorithm}
   \label{algo:GCG}
   \begin{algorithmic}[1]
    \STATE {\bfseries Input:} A differentiable and possibly non-convex function $f$ and its gradient function $\nabla f$, a convex, possibly non-differentiable function $g$, a convex and compact set $\mathcal{C}$.\\
    \STATE Initialize: $\mathbfQ^{(0)} \in \mathcal{C}$\\
	\FOR{$i=1,2,3, \ldots$}
        \STATE $\mathbfG^{(i)} \gets
            \mathbfQ^{(i)}+
            \nabla f(\mathbfQ^{(i)})$ 
            \texttt{// Gradient computation}\\
	\STATE $\widetilde{\mathbfQ}^{(i)} \gets       
            \argmin_{\mathbfQ\in\mathcal{C}} 
            \left<\mathbfQ, \mathbfG^{(i)}\right>
            +g(\mathbfQ)$ 
             \texttt{// Partial linearization}\\
        \STATE  Find the optimal step $\eta^{(i)}$ with $\Delta\mathbfQ=\widetilde{\mathbfQ}^{(i)}-\mathbfQ^{(i)}$
            \begin{equation*}
                \eta^{(i)}=\argmin_{\eta\in [0,1]} f(\mathbfQ^{(i)}+\eta^{(i)}\Delta\mathbfQ) + g(\mathbfQ^{(i)}+\eta^{(i)}\Delta\mathbfQ)
            \end{equation*}
        or choose $\eta^{(i)}\in [0,1]$ so that it satisfies the Armijo rule.\\
            \texttt{// Exact or backtracking line-search}\\
        \STATE $\mathbfQ^{(i+1)} \gets
            \left(1-\eta^{(i)}\right)\mathbf{Q}^{(i)}
            +\eta^{(i)}\widetilde{\mathbfQ}^{(i)}$
            \texttt{// Update}\\
	\ENDFOR
    \STATE {\bfseries Return:} $\mathbfQ^{(i)}$
\end{algorithmic}
\end{algorithm}
% \vspace{-1em}

\section{Derivation Details of the Efficient Scaling Iteration Method (Lemma \ref{lemma:sinkhorn-like})}
We have developed a lightspeed computational method that involves a scaling iteration within a generalized conditional gradient framework to solve CSOT efficiently.
Specifically, the efficiency is mainly brought by the scaling iteration method for solving the COT problem (Problem (\ref{eq:semi_partialOT_entropic})), which is proposed in Lemma \ref{lemma:sinkhorn-like}. 

This section presents the derivation details of this efficient scaling iteration method. First, we show that solving COT is equivalent to solving the KL projection problem with the curriculum constraints (Lemma \ref{lemma:KL_projection_whole}).
Then such a KL projection problem can be solved by iterating Dykstra’s algorithm (Lemma \ref{lemma:KL_projection}).
However, Dykstra’s algorithm is based on matrix-matrix multiplication which is computationally extensive.
Therefore, we propose a fast implementation of Dykstra’s algorithm by only performing matrix-vector multiplications, \ie efficient scaling iteration (Lemma \ref{lemma:sinkhorn-like}).

\begin{lemma} 
\label{lemma:KL_projection_whole}
Solving the Problem (\ref{eq:semi_partialOT_entropic}) is equivalent to solving the KL projection problem from the matrix $e^{-\mathbfC/\varepsilon}$ to the curriculum constraint $\cOTpolytope$, \ie
    \begin{equation}
        \min_{\mathbfQ\in\cOTpolytope}
            \left<\mathbfC, \mathbfQ\right>
        +\varepsilon\left<\mathbfQ, \log\mathbfQ\right>
        \Leftrightarrow
        \min_{\mathbfQ\in\cOTpolytope}
        \varepsilon\text{KL}(\mathbfQ|e^{-\mathbfC/\varepsilon}),
    \end{equation}
\end{lemma}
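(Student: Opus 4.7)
The plan is to establish the equivalence by directly expanding the generalized KL divergence (since the coupling $\mathbfQ$ is not constrained to be a probability distribution, I will use the unnormalized/I-divergence form) and showing that the discrepancy between the two objectives is a constant on the feasible set $\cOTpolytope$. Concretely, with $\mathbfK := e^{-\mathbfC/\varepsilon}$ interpreted element-wise, I would write
\begin{equation*}
\varepsilon\,\text{KL}(\mathbfQ|\mathbfK)
=\varepsilon\sum_{i,j}\mathbfQ_{ij}\log\frac{\mathbfQ_{ij}}{\mathbfK_{ij}}
-\varepsilon\sum_{i,j}\mathbfQ_{ij}+\varepsilon\sum_{i,j}\mathbfK_{ij},
\end{equation*}
substitute $\log\mathbfK_{ij}=-\mathbfC_{ij}/\varepsilon$, and regroup terms to obtain
\begin{equation*}
\varepsilon\,\text{KL}(\mathbfQ|\mathbfK)
= \left<\mathbfC,\mathbfQ\right> + \varepsilon\left<\mathbfQ,\log\mathbfQ\right>
- \varepsilon\,\|\mathbfQ\|_1 + \varepsilon\,\|\mathbfK\|_1 .
\end{equation*}

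Next, I would use the curriculum constraint to dispose of the leftover terms. The second marginal constraint in $\cOTpolytope$ is the equality $\mathbfQ^\top\ones_{|\bmAlpha|}=\bmBeta$, which forces $\|\mathbfQ\|_1=\ones_{|\bmAlpha|}^\top\mathbfQ\ones_{|\bmBeta|}=\ones_{|\bmBeta|}^\top\bmBeta=\|\bmBeta\|_1=m$ for every feasible $\mathbfQ\in\cOTpolytope$. Since $\|\mathbfK\|_1$ depends only on $\mathbfC$ and $\varepsilon$, the quantity $\varepsilon(\|\mathbfK\|_1-\|\mathbfQ\|_1)$ is a fixed additive constant over the feasible set, and thus does not affect the $\argmin$. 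This gives the claimed equivalence of the two minimization problems.

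The main technical point (rather than obstacle) is simply the choice of divergence: one must use the generalized KL divergence that remains meaningful for positive measures of mass $\neq 1$, since the row marginal is only an inequality $\mathbfQ\ones_{|\bmBeta|}\leq\bmAlpha$ and the total mass is $m$ rather than $1$. Once that convention is fixed, the derivation reduces to the expansion above together with the observation that the equality column-marginal pins $\|\mathbfQ\|_1$ to the constant $m$; everything else is a direct rearrangement. I would conclude by noting that the two objectives differ by a feasibility-independent constant, hence share the same minimizers, which is exactly what Lemma~\ref{lemma:KL_projection_whole} asserts and what is needed to subsequently invoke Dykstra's algorithm in Lemma~\ref{lemma:sinkhorn-like}.
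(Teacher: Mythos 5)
Your proof is correct, but it takes a genuinely different (though closely related) route from the paper's. The paper treats $\text{KL}(\mathbfQ|\mathbfM)$ simply as $\left<\mathbfQ,\log(\mathbfQ/\mathbfM)\right>$ (i.e. without the linear correction terms), so its proof is a four-line chain of exact equalities: $\left<\mathbfC,\mathbfQ\right>+\varepsilon\left<\mathbfQ,\log\mathbfQ\right> = \varepsilon\left<\mathbfQ,\mathbfC/\varepsilon+\log\mathbfQ\right> = \varepsilon\left<\mathbfQ,\log(\mathbfQ/e^{-\mathbfC/\varepsilon})\right> = \varepsilon\,\text{KL}(\mathbfQ|e^{-\mathbfC/\varepsilon})$. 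Under that convention the two objective \emph{functions} coincide pointwise, so no appeal to the constraint set is needed. You instead adopt the fully unnormalized (generalized) KL divergence with the $-\|\mathbfQ\|_1+\|\mathbfK\|_1$ terms, which is the convention of Benamou et al.\ that the paper cites; the objectives then differ by $\varepsilon(\|\mathbfK\|_1-\|\mathbfQ\|_1)$, and you must additionally observe that the equality column marginal $\mathbfQ^\top\ones_{|\bmAlpha|}=\bmBeta$ pins $\|\mathbfQ\|_1$ to the constant $m$ on $\cOTpolytope$. Both derivations are valid and lead to the same conclusion. The paper's is shorter and constraint-agnostic; yours is more explicit about the divergence definition used in the reference it relies on, at the modest cost of one extra step that specifically exploits the equality half of the curriculum constraints (it would \emph{not} carry over unchanged if both marginals were inequalities, whereas the paper's identity-based argument would).
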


\begin{proof}
\begin{align*}
        &\quad\, \min_{\mathbfQ\in\cOTpolytope}
            \left<\mathbfC, \mathbfQ\right>
        +\varepsilon\left<\mathbfQ, \log\mathbfQ\right>\\ 
        &=\min_{\mathbfQ\in\cOTpolytope}
            \left<\mathbfQ, \mathbfC+\varepsilon\log\mathbfQ\right>\\
        &=\min_{\mathbfQ\in\cOTpolytope}
            \varepsilon\left<\mathbfQ, \mathbfC/\varepsilon+\log\mathbfQ\right>\\
        &=\min_{\mathbfQ\in\cOTpolytope}
            \varepsilon\left<\mathbfQ, \log\frac{\mathbfQ}{e^{-\mathbfC/\varepsilon}}\right>\\
        &=\min_{\mathbfQ\in\cOTpolytope}
            \varepsilon\text{KL}(\mathbfQ|e^{-\mathbfC/\varepsilon})
\end{align*}
\end{proof}

Recall that the curriculum constraints $\cOTpolytope$ can be expressed as an intersection of two convex but not affine sets:
\begin{equation}
\setC_{1}\defeq\left\{\mathbfQ\in\mathbb{R}_{+}^{|\bmAlpha| \times |\bmBeta|}
|\mathbfQ\ones_{|\bmBeta|}\leq\bmAlpha
\right\}
\qandq
\setC_{2}\defeq\left\{\mathbfQ\in\mathbb{R}_{+}^{|\bmAlpha| \times |\bmBeta|}
|\mathbfQ^\top\ones_{|\bmAlpha|}=\bmBeta
\right\}.
\end{equation}

\begin{lemma} 
\label{lemma:KL_projection}
The KL projection from a matrix $\mathbfM$ to the $\setC_{1}$ and $\setC_{2}$ are expressed as
\begin{equation}
    \KLprojectionCI(\mathbfM)={\rm{diag}}\left(\min\left(\frac{\bmAlpha}{{\mathbfM}\ones_{|\bmBeta|}},\ones_{|\bmBeta|}\right)\right){\mathbfM},
\end{equation}
\begin{equation}
    \KLprojectionCII(\mathbfM)={\mathbfM}{\rm{diag}}\left(\frac{\bmBeta}{{\mathbfM}^\top\ones_{|\bmAlpha|}}\right).
\end{equation}
Then the Problem (\ref{eq:semi_partialOT_entropic}) can be solved by Dykstra iterations, presented in Algorithm \ref{algo:dykstra}.
\end{lemma}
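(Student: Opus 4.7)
The plan is to establish the two projection formulas by directly solving each constrained KL minimization via Lagrangian/KKT analysis, since both problems are separable and strictly convex in $\mathbfQ$ with closed-form optima.

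For $\KLprojectionCII(\mathbfM)$ I would introduce multipliers $\bmlambda \in \mathbb{R}^{|\bmBeta|}$ for the column-sum equality $\mathbfQ^\top\ones_{|\bmAlpha|} = \bmBeta$ and form the Lagrangian of $\text{KL}(\mathbfQ|\mathbfM)$. Differentiating entrywise gives the first-order condition $\log(\mathbfQ_{ij}/\mathbfM_{ij}) + \lambda_j = 0$, so $\mathbfQ_{ij} = \mathbfM_{ij}\, e^{-\lambda_j}$. Substituting into the column constraint yields $e^{-\lambda_j} = \beta_j/(\mathbfM^\top\ones)_j$, which in matrix form is exactly the claimed right-diagonal scaling. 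This is essentially the standard column-normalization computation familiar from Sinkhorn-type derivations.

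For $\KLprojectionCI(\mathbfM)$ I would use the same approach but with KKT multipliers $\mu_i \geq 0$ for the row inequality $\sum_j \mathbfQ_{ij} \leq \alpha_i$. Stationarity again gives $\mathbfQ_{ij} = \mathbfM_{ij}\, e^{-\mu_i}$, and complementary slackness splits into two cases per row: if $(\mathbfM\ones)_i \leq \alpha_i$ the inequality is inactive so $\mu_i = 0$ and the scaling factor is $1$; otherwise the constraint is tight and $e^{-\mu_i} = \alpha_i/(\mathbfM\ones)_i < 1$. Both cases collapse into $e^{-\mu_i} = \min(1,\, \alpha_i/(\mathbfM\ones)_i)$, which is the claimed left-diagonal scaling. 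The key sanity check is that this choice is simultaneously primal-feasible ($\mathbfQ\ones \leq \bmAlpha$ holds in either case) and dual-feasible ($\mu_i \geq 0$), so KKT sufficiency for the convex program confirms optimality.

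Once both projection formulas are in hand, the Dykstra statement reduces to citation: $\setC_1$ and $\setC_2$ are closed convex subsets of $\mathbb{R}_+^{|\bmAlpha|\times|\bmBeta|}$ with nonempty intersection $\cOTpolytope$ (guaranteed by the assumption $\|\bmAlpha\|_1 \geq \|\bmBeta\|_1$), and the KL divergence is the Bregman divergence of the negative entropy, so the classical convergence result of Bauschke--Lewis applies: Dykstra's alternating KL-projection iterates converge to $\KLprojection_{\cOTpolytope}(e^{-\mathbfC/\varepsilon})$, which by Lemma~\ref{lemma:KL_projection_whole} is the unique solution of the entropic COT problem. The main obstacle will be the $\setC_1$ case: because the constraint is an inequality, I must carefully justify the piecewise $\min$ operator via case analysis on complementary slackness, whereas the $\setC_2$ case is a routine one-line Lagrangian computation.
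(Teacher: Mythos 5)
Your Lagrangian/KKT derivation of both projection formulas is correct: for $\setC_2$ the equality multipliers give $\mathbfQ_{ij}=\mathbfM_{ij}e^{-\lambda_j}$ and the column constraint fixes the right-diagonal scaling, and for $\setC_1$ the nonnegative multipliers plus complementary slackness cleanly collapse the two cases into $e^{-\mu_i}=\min\bigl(1,\alpha_i/(\mathbfM\ones_{|\bmBeta|})_i\bigr)$, which is the claimed left-diagonal scaling. Your feasibility check $\mathbfQ_{ij}=\alpha_i\beta_j/\lVert\bmAlpha\rVert_1$ for the nonempty intersection is also the right observation. The paper, however, does not actually carry out a derivation here: its ``proof'' of this lemma is a one-line citation to Propositions~1 and~5 of Benamou et~al.\ (2015), whose proofs are precisely the stationarity-plus-complementary-slackness computation you reproduce; the Dykstra convergence claim is likewise handled by citation in both your write-up and the paper. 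So the mathematical content is identical, but you supply the self-contained argument the paper delegates to the reference. One minor remark: in the stated formula for $\KLprojectionCI(\mathbfM)$ the vector inside the $\min$ should be $\ones_{|\bmAlpha|}$ rather than $\ones_{|\bmBeta|}$ (the argument $\bmAlpha/(\mathbfM\ones_{|\bmBeta|})$ has dimension $|\bmAlpha|$); your derivation implicitly uses the correct dimension, consistent with the recursion in Lemma~\ref{lemma:sinkhorn-like}.
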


Lemma \ref{lemma:KL_projection} can be derived form the Proposition 1 and Proposition 5 in \cite{benamou2015iterative}.

\begin{algorithm}[htb]
   \caption{Dykstra’s algorithm for entropic regularized Curriculum OT}
   \label{algo:dykstra}
   \begin{algorithmic}[1]
    \STATE {\bfseries Input:} Cost matrix $\mathbfC$, marginal constraints vectors $\bmAlpha$ and $\bmBeta$, entropic regularization weight $\varepsilon$\\
    \STATE Initialize: $\mathbfQ^{(0)} \gets e^{-\mathbfC/\varepsilon}$, $\mathbfU'^{(0)} \gets \ones_{|\bmAlpha| \times |\bmBeta|}$, $\mathbfU^{(0)} \gets \ones_{|\bmAlpha| \times |\bmBeta|}$\\
	\FOR{$t=1,2,3, \ldots$}
	\STATE $\mathbfQ'^{(t)} \gets \KLprojectionCI(\mathbfQ^{(t-1)}\odot\mathbfU'^{(t-1)})$ \\
        \STATE $\mathbfU'^{(t)} \gets \mathbfU'^{(t-1)}\odot\frac{\mathbfQ^{(t-1)}}{\mathbfQ'^{(t)}} $\\
        \STATE $\mathbfQ^{(t)} \gets \KLprojectionCII(\mathbfQ'^{(t)}\odot\mathbfU^{(t-1)})$\\
        \STATE $\mathbfU^{(t)} \gets \mathbfU^{(t-1)}\odot\frac{\mathbfQ'^{(t)}}{\mathbfQ^{(t)}} $\\
	\ENDFOR
    \STATE {\bfseries Return:} $\mathbfQ^{(t)}$
\end{algorithmic}
\end{algorithm}

The limitation of Dykstra’s algorithm comes from its computationally extensive matrix-matrix multiplication. To handle this issue, we propose a fast implementation of Dykstra’s algorithm by only performing matrix-vector multiplications, \ie efficient scaling iteration (Lemma \ref{lemma:sinkhorn-like}).

\paragraph{Lemma 1.}
\textit{(Efficient scaling iteration for Curriculum OT)
When solving Problem (\ref{eq:semi_partialOT_entropic})
by iterating Dykstra's algorithm,
the matrix $\mathbfQ^{(n)}$ at $n$ iteration is a diagonal scaling of $\mathbfK:=e^{-\mathbfC/\varepsilon}$, which is the element-wise exponential matrix of $-\mathbfC/\varepsilon$:}
\begin{equation}
\mathbfQ^{(n)}={\rm{diag}}\left(\bmu^{(n)}\right)
                \mathbfK
               {\rm{diag}}\left(\bmv^{(n)}\right),
\end{equation}
\textit{where the vectors $\bmu^{(n)}\in \mathbb{R}^{|\bmAlpha|}$, $\bmv^{(n)} \in\mathbb{R}^{|\bmBeta|}$ satisfy $\bmv^{(0)}=\ones_{|\bmBeta|}$ and follow the recursion formula}
\begin{equation}
\bmu^{(n)}=\min\left(\frac{\bmAlpha}{\mathbfK\bmv^{(n-1)}},\ones_{|\bmAlpha|}\right)
\quad {\rm{and}} \quad
\bmv^{(n)}=\frac{\bmBeta}{\mathbfK^\top\bmu^{(n)}}.
\end{equation}

\begin{proof}
Firstly, let $\bmu^{(1)}:=\min\left(\frac{\bmAlpha}{\mathbfQ^{(0)}\ones_{|\bmBeta|}},\ones_{|\bmAlpha|}\right)$.
Following the Algorithm \ref{algo:dykstra} and Lemma \ref{lemma:KL_projection}, we derive $\mathbfQ'^{(1)}$ and $\mathbfU'^{(1)}$.
Now we have
    \begin{equation*}
    \mathbfQ'^{(1)}=\KLprojectionCI(\mathbfQ^{(0)}\odot\mathbfU'^{(0)})
    =\diag\left(\min\left(\frac{\bmAlpha}{\mathbfQ^{(0)}\ones_{|\bmBeta|}},\ones_{|\bmAlpha|}\right)\right)\mathbfQ^{(0)}
    =\diag\left(\bmu^{(1)}\right)\mathbfQ^{(0)},
    \end{equation*}
    \begin{equation*}
    \mathbfU'^{(1)}=\mathbfU'^{(0)}\odot\frac{\mathbfQ^{(0)}}{\mathbfQ'^{(1)}}
    =\frac{\mathbfQ^{(0)}}{\diag\left(\bmu^{(1)}\right)\mathbfQ^{(0)}}
    =\diag\left(1/\bmu^{(1)}\right) \ones_{|\bmAlpha| \times |\bmBeta|}.
    \end{equation*}

Then let $\bmv^{(1)}:=\frac{\bmBeta}{\mathbfQ^{(0)\top}\bmu^{(1)}}$.
And we derive $\mathbfQ^{(1)}$ and $\mathbfU^{(1)}$ as follows:
    \begin{align*}
    \mathbfQ^{(1)}&= \KLprojectionCII(\mathbfQ'^{(1)}\odot\mathbfU^{(0)})\\
    &=\mathbfQ'^{(1)}\diag\left(\frac{\bmBeta}{\mathbfK^{(1)\top}\ones_{|\bmAlpha|}}\right)\\
    &=\diag\left(\bmu^{(1)}\right) \mathbfQ^{(0)} \diag\left(\frac{\bmBeta}{\mathbfQ^{(0)\top}\diag\left(\bmu^{(1)}\right)\ones_{|\bmAlpha|}}\right)\\
    &=\diag\left(\bmu^{(1)}\right) \mathbfQ^{(0)} \diag\left(\frac{\bmBeta}{\mathbfQ^{(0)\top}\bmu^{(1)}}\right)\\
    &=\diag\left(\bmu^{(1)}\right) \mathbfQ^{(0)} \diag\left(\bmv^{(1)}\right),
    \end{align*}
    \begin{equation*}
    \mathbfU^{(1)}= \mathbfU^{(0)}\odot\frac{\mathbfQ'^{(1)}}{\mathbfQ^{(1)}}
    =\frac{\diag\left(\bmu^{(1)}\right) \mathbfQ^{(0)}}{\diag\left(\bmu^{(1)}\right) \mathbfQ^{(0)} \diag\left(\bmv^{(1)}\right)}
    =\ones_{|\bmAlpha| \times |\bmBeta|} \diag\left(1/\bmv^{(1)}\right).
    \end{equation*}

For simplicity, before deriving $\mathbfQ'^{(2)}$ and $\mathbfU'^{(2)}$, we derive $\mathbfQ^{(1)}\odot\mathbfU'^{(1)}$ firstly:
    \begin{align*}
    \mathbfQ^{(1)}\odot\mathbfU'^{(1)}&= 
    \left(\diag\left(\bmu^{(1)}\right) \mathbfQ^{(0)} \diag\left(\bmv^{(1)}\right)\right)
    \odot
    \left(\diag\left(1/\bmu^{(1)}\right) \ones_{|\bmAlpha| \times |\bmBeta|}\right)\\
    &=\left(\mathbfQ^{(0)} \diag\left(\bmv^{(1)}\right)\right)
    \odot
    \left(\ones_{|\bmAlpha| \times |\bmBeta|}\right)\\
    &=\mathbfQ^{(0)} \diag\left(\bmv^{(1)}\right).
    \end{align*}

Let $\bmu^{(2)}:=\min\left(\frac{\bmAlpha}{\mathbfQ^{(0)} \bmv^{(1)}},\ones_{|\bmAlpha|}\right)$.
We can now derive $\mathbfQ'^{(2)}$ and $\mathbfU'^{(2)}$ as follows:
    \begin{align*}
    \mathbfQ'^{(2)}&=\KLprojectionCI(\mathbfQ^{(1)}\odot\mathbfU'^{(1)})\\
    &=\diag\left(\min\left(\frac{\bmAlpha}{\left(\mathbfQ^{(1)}\odot\mathbfU'^{(1)}\right)\ones_{|\bmBeta|}},\ones_{|\bmAlpha|}\right)\right)
    \left(\mathbfQ^{(1)}\odot\mathbfU'^{(1)}\right)\\
    &=\diag\left(\min\left(\frac{\bmAlpha}{\mathbfQ^{(0)} \diag\left(\bmv^{(1)}\right)\ones_{|\bmBeta|}},\ones_{|\bmAlpha|}\right)\right)
    \mathbfQ^{(0)} \diag\left(\bmv^{(1)}\right)\\
    &=\diag\left(\min\left(\frac{\bmAlpha}{\mathbfQ^{(0)} \bmv^{(1)}},\ones_{|\bmAlpha|}\right)\right)
    \mathbfQ^{(0)} \diag\left(\bmv^{(1)}\right)\\
    &=\diag\left(\bmu^{(2)}\right) \mathbfQ^{(0)} \diag\left(\bmv^{(1)}\right),
    \end{align*}
    \begin{align*}
    \mathbfU'^{(2)}&=\mathbfU'^{(1)}\odot\frac{\mathbfQ^{(1)}}{\mathbfQ'^{(2)}}\\
    &=\left(\diag\left(1/\bmu^{(1)}\right) \ones_{|\bmAlpha| \times |\bmBeta|}\right)
    \odot 
    \frac{\diag\left(\bmu^{(1)}\right) \mathbfQ^{(0)} \diag\left(\bmv^{(1)}\right)}{\diag\left(\bmu^{(2)}\right) \mathbfQ^{(0)} \diag\left(\bmv^{(1)}\right)}\\
    &=\left(\diag\left(1/\bmu^{(1)}\right) \ones_{|\bmAlpha| \times |\bmBeta|}\right)
    \odot 
    \frac{\diag\left(\bmu^{(1)}\right)}{\diag\left(\bmu^{(2)}\right)}\\
    &=\left(\diag\left(1/\bmu^{(1)}\right) \ones_{|\bmAlpha| \times |\bmBeta|}\right)
    \odot 
    \diag\left(\bmu^{(1)}/\bmu^{(2)}\right)\\
    &=\diag\left(1/\bmu^{(2)}\right) \ones_{|\bmAlpha| \times |\bmBeta|}.
    \end{align*}

For simplicity, before deriving $\mathbfQ^{(2)}$ and $\mathbfU^{(2)}$, we derive $\mathbfQ'^{(2)}\odot\mathbfU^{(1)}$ firstly:
    \begin{align*}
    \mathbfQ'^{(2)}\odot\mathbfU^{(1)}&=
    \left(\diag\left(\bmu^{(2)}\right) \mathbfQ^{(0)} \diag\left(\bmv^{(1)}\right)\right)
    \odot
    \left(\ones_{|\bmAlpha| \times |\bmBeta|} \diag\left(1/\bmv^{(1)}\right)\right)\\
    &=\left(\diag\left(\bmu^{(2)}\right) \mathbfQ^{(0)} \right)
    \odot
    \ones_{|\bmAlpha| \times |\bmBeta|}\\
    &=\diag\left(\bmu^{(2)}\right) \mathbfQ^{(0)}.
    \end{align*}

Let $\bmv^{(2)}:=\frac{\bmBeta}{\mathbfQ^{(0)\top}\bmu^{(2)}}$.
We can now derive $\mathbfQ^{(2)}$ and $\mathbfU^{(2)}$ as follows:
    \begin{align*}
    \mathbfQ^{(2)}&= \KLprojectionCII(\mathbfQ'^{(2)}\odot\mathbfU^{(1)})\\
    &=\diag\left(\bmu^{(2)}\right) \mathbfQ^{(0)}
    \diag\left(\frac{\bmBeta}{{(\diag\left(\bmu^{(2)}\right) \mathbfQ^{(0)})}^\top\ones_{|\bmAlpha|}}\right)\\
    &=\diag\left(\bmu^{(2)}\right) \mathbfQ^{(0)}
    \diag\left(\frac{\bmBeta}{\mathbfQ^{(0)\top}\bmu^{(2)}}\right)\\
    &=\diag\left(\bmu^{(2)}\right) \mathbfQ^{(0)} \diag\left(\bmv^{(2)}\right)
    \end{align*}
    \begin{align*}
    \mathbfU^{(2)}&= \mathbfU^{(1)}\odot\frac{\mathbfQ'^{(2)}}{\mathbfQ^{(2)}}\\
    &=\left(\ones_{|\bmAlpha| \times |\bmBeta|} \diag\left(1/\bmv^{(1)}\right)\right)
    \odot
    \frac{\diag\left(\bmu^{(2)}\right) \mathbfQ^{(0)} \diag\left(\bmv^{(1)}\right)}{\diag\left(\bmu^{(2)}\right) \mathbfQ^{(0)} \diag\left(\bmv^{(2)}\right)}\\
    &=\left(\ones_{|\bmAlpha| \times |\bmBeta|} \diag\left(1/\bmv^{(1)}\right)\right)
    \odot
    \frac{\diag\left(\bmv^{(1)}\right)}{\diag\left(\bmv^{(2)}\right)}\\
    &=\left(\ones_{|\bmAlpha| \times |\bmBeta|} \diag\left(1/\bmv^{(1)}\right)\right)
    \odot
    \diag\left(\bmv^{(1)}/\bmv^{(2)}\right)\\
    &=\ones_{|\bmAlpha| \times |\bmBeta|} \diag\left(1/\bmv^{(2)}\right)
\end{align*}

To conclude, it can be easily summarized that
\begin{equation*}
\mathbfQ^{(n)}={\rm{diag}}\left(\bmu^{(n)}\right)
                \mathbfK
               {\rm{diag}}\left(\bmv^{(n)}\right),
\end{equation*}
where $\bmu^{(n)}=\min\left(\frac{\bmAlpha}{\mathbfK\bmv^{(n-1)}},\ones_{|\bmAlpha|}\right)$, $\bmv^{(n)}=\frac{\bmBeta}{\mathbfK^\top\bmu^{(n)}}$, and $\bmv^{(0)}=\ones_{|\bmBeta|}$.

\end{proof}

\newpage

\end{document}